\newcommand{\x}{\mathbold{x}}
\newcommand{\y}{\mathbold{y}}
\newcommand{\z}{\mathbold{z}}
\newcommand{\s}{\mathbold{s}}
\newcommand{\e}{\mathbold{e}}
\renewcommand{\t}{\mathbold{t}}
\newcommand{\h}{\mathbold{h}}
\newcommand{\n}{\mathbold{n}}
\newcommand{\bphi}{\mathbold{\phi}}
\newcommand{\bvarphi}{\mathbold{\varphi}}
\newcommand{\bPhi}{\mathbold{\Phi}}
\newcommand{\bPsi}{\mathbold{\Psi}}
\renewcommand{\u}{\mathbold{u}}
\renewcommand{\v}{\mathbold{v}}
\newcommand{\LTV}{{\L_{\textit{tv}}}}
\newcommand{\LTVtau}{{\L_{\textit{tv}}^\tau}}
\newcommand{\bGamma}{\mathbold{\Gamma}}
\newcommand{\transp}{\mathsf{T}}
\newcommand{\Lap}{\mathbold{L}}
\renewcommand{\P}{\mathbold{P}}
\newcommand{\I}{\mathbold{I}}
\newcommand{\bSigma}{\mathbold{\Sigma}}
\renewcommand{\S}{\mathbold{S}}
\newcommand{\M}{\mathbold{M}}
\newcommand{\A}{\mathbold{A}}
\newcommand{\B}{\mathbold{B}}
\newcommand{\C}{\mathbold{C}}
\newcommand{\U}{\mathbold{U}}
\newcommand{\D}{\mathbold{D}}
\renewcommand{\H}{\mathbold{H}}
\newcommand{\eg}{e.g.,\/~}
\newcommand{\ie}{i.e.,\/~}
\newcommand{\cf}{cf.\/~}
\newcommand{\wrt}{w.r.t.\xspace}                     
\newcommand{\ARMA}[1]{\textrm{ARMA}$_{#1}$\xspace}                    
\newcommand{\FIR}[1]{\textrm{FIR}$_{#1}$\xspace}
\renewcommand{\paragraph}[1]{\vspace{2mm}\noindent \textbf{#1}}
\renewcommand{\(}{\left(}
\renewcommand{\)}{\right)}
\DeclarePairedDelimiter{\norm}{\lVert}{\rVert}
\DeclarePairedDelimiter{\abs}{\lvert}{\rvert}
\DeclareMathOperator*{\argmin}{argmin}
\newcommand{\Graph}{\mathcal{G}}
\renewcommand{\L}{\Lap}
\newcommand{\lmin}{\lambda_{\textit{min}}}
\newcommand{\lmax}{\lambda_{\textit{max}}}
\newcommand{\mrad}{{\varrho}}
\newtheoremstyle{slplain}
  {0.7\baselineskip\@plus.2\baselineskip\@minus.2\baselineskip} 
  {0.0\baselineskip\@plus.2\baselineskip\@minus.2\baselineskip}
  {\slshape}{}{\itshape}{.}{ }{}
\theoremstyle{slplain}
\newtheorem{theorem}{Theorem}
\newtheorem{remark}{Remark}
\newcommand{\annotate}[1]{{\color{black}#1}}
\newcommand{\rev}[1]{{#1}\xspace} 
\newcommand{\revv}[1]{{#1}\xspace}
\newcommand{\revvv}[1]{{#1}\xspace}  
\providecommand{\keywords}[1]{\textbf{\textit{Keywords---}} #1}
\title{Distributed\! Time-Varying\! Graph\! Filtering\!}
\title{Autoregressive Moving\! Average\! Graph\! Filtering\!}
\begin{document}

\author{\IEEEauthorblockN{Elvin~Isufi$^{*}$,~\IEEEmembership{Student Member,~IEEE,}
        Andreas Loukas$^{*}$,~\IEEEmembership{Member,~ACM,}
         Andrea Simonetto,~\IEEEmembership{Member,~IEEE,}
        and~Geert Leus,~\IEEEmembership{Fellow,~IEEE}}
\thanks{$^{*}$Authors contributed equally in the preparation of this manuscript.}
\thanks{E. Isufi, A. Simonetto and G. Leus are with the faculty of Electrical Engineering, Mathematics and Computer Science, Delft University of Technology, 2826 CD Delft, The Netherlands. A. Loukas is with Department of Telecommunication Systems, TU Berlin, Germany. E-mails: {$\{$e.isufi-1, a.simonetto, g.j.t.leus$\}$@tudelft.nl}, a.loukas@tu-berlin.de. This manuscript presents a generalisation of~\cite{Loukas2015c}. This research was supported in part by STW under the D2S2 project from the ASSYS program (project 10561).}
}

\maketitle

\begin{abstract}
One of the cornerstones of the field of signal processing on graphs are graph filters, direct analogues of classical filters, but intended for signals defined on graphs.
This work brings forth new insights on the distributed graph filtering problem. 
We design a family of autoregressive moving average (ARMA) recursions, which (i) are able to \emph{approximate} any desired graph frequency response, and (ii) give \emph{exact} solutions for specific graph signal denoising and interpolation problems.
The philosophy, to design the ARMA coefficients independently from the underlying graph, renders the ARMA graph filters suitable in static and, particularly, time-varying settings. The latter occur when the graph signal and/or graph topology are changing over time.
We show that in case of a time-varying graph signal our approach extends naturally to a two-dimensional filter, operating concurrently in the graph \emph{and} regular time domain. {We also derive the graph filter behavior,} as well as sufficient conditions for filter stability when the graph \emph{and} signal are time-varying.
\rev{The analytical and numerical results presented in this paper illustrate that ARMA graph filters are practically appealing for static and time-varying settings, as predicted by theoretical derivations.}
\end{abstract}

\keywords{\textbf{distributed graph filtering, signal processing on graphs, \revv{infinite impulse response graph filters}, \revv{autoregressive moving average graph filters}, time-varying graph signals, time-varying graphs.}}

\section{Introduction}
\label{sec:intro}

Due to their ability to capture the complex relationships present in many high-dimensional datasets, graphs have emerged as a favorite tool for data analysis. Indeed, in recent years we have seen significant efforts to extend classical signal processing methods to the graph setting, where, instead of regular low-dimensional signals (\eg a temporal or spatial signals), one is interested in graph signals, \ie signals defined over the nodes of a graph~\cite{Shuman2013}. The introduction of a Fourier-like transform for graph signals brought the tool to analyze these signals not only in the node domain, but also in the graph frequency domain~\cite{Shuman2013,Sandryhaila2013,Rabbat2014}.
\revv{One of the key tools of graph signal analysis are \emph{graph filters}.} In a direct analogy to classical filters, graph filters process a graph signal by selectively amplifying its graph Fourier coefficients. This renders them ideal for a wide range of tasks, ranging from graph signal smoothing and denoising~\cite{Zhang2008,Shuman2011}, classification~\cite{Smola2003,Zhu2003,Belkin2004} and interpolation~\cite{Narang2013}, segmentation~\cite{Loukas2014}, wavelet construction~\cite{Hammond2011}, and dictionary learning~\cite{Dong2014}---among others.

Distributed implementations of filters on graphs emerged as a way of increasing the scalability of computation~\cite{Shuman2011,Sandryhaila2014,Safavi2014,Segarra2015}. \rev{In this way, a desired graph filtering operation is performed by only local information exchange between neighbors and there is no need for a node to have access to all the data.} Nevertheless, being inspired by finite impulse response (FIR) graph filters, \annotate{these methods are sensitive to time variations, such as time-varying signals and/or graphs. An alternative approach, namely distributed infinite impulse response (IIR) graph filtering, was recently proposed~\cite{Loukas2015,Shi2015}.}
Compared to FIR graph filters, IIR filters allow for the computation of a larger family of responses, and give exact rather than approximate solutions to specific denoising~\cite{Zhang2008} and interpolation~\cite{Narang2013} problems.
\annotate{Yet the issue of time variations has so far been unresolved.} 

In a different context, we introduced IIR filter design (in fact, prior to \cite{Shi2015}) using an autoregressive process called the potential kernel~\cite{Loukas2013,Loukas2014}. These graph filters were shown to facilitate information processing tasks in sensor networks, such as smoothing and event region detection, but, due to their ad-hoc design, they only accomplished a limited subset of filtering objectives.
In this paper, we build upon our prior work to develop more general autoregressive moving average (ARMA) graph filters of any order, using parallel or periodic concatenations of the potential kernel. The design philosophy of these graph filters allows for the approximation of any desired graph frequency response {without} knowing the structure of the underlying graph. In this way, we design the filter coefficients independently of the particular graph. This allows the ARMA filters to be \emph{universally} applicable for any graph structure, and in particular when the graph varies over time, or when the graph structure is unknown to the designer.

Though ARMA graph filters belong to the class of IIR graph filters, they have a distinct design philosophy which bestows them the ability to filter graph signals not only in the graph frequency domain, but also in the regular temporal frequency domain (in case the graph signal is time-varying). 
Specifically, our design extends {naturally} to time-varying signals leading to two-dimensional ARMA filters: a filter in the graph domain as well as a filter in the time domain. 

\vspace{2mm}\noindent Our contributions are twofold: 

\vspace{2mm}\noindent\emph{(i) Distributed graph filters (Sections~\ref{sec:arma} and~\ref{sec:design})}. We propose two types of autoregressive moving average (ARMA) recursions, namely the parallel and periodic implementation, which attain a rational graph frequency response. Both methods are implemented distributedly, attain fast convergence, and have message and memory requirements that are linear in the number of graph edges and the approximation order.
Using a variant of Shanks' method, we are able to design graph filters that approximate any desired graph frequency response. \rev{In addition, we give exact closed-form solutions for tasks such as Tikhonov and Wiener-based graph signal denoising and graph signal interpolation under smoothness assumptions.}

\vspace{2mm}\noindent\emph{(ii) Time-varying graphs and signals (Section~\ref{sec:time_variations})}. We begin by providing a complete temporal characterization of ARMA graph filters \wrt time-varying graph signals.
Our results show that the proposed recursions naturally extend to two-dimensional filters operating simultaneously in the graph-frequency domain and in the time-frequency domain. 
We also discuss the ARMA recursion behavior when both the graph topology and graph signal are time-varying. Specifically, we provide sufficient conditions for filter stability, and show that a decomposition basis exists (uniquely determined by the sequence of graph realizations), over which the filters achieve the same frequency response as in the static case.

Our results are validated by simulations in Section~\ref{sec:numerical_results}, and conclusions are drawn in Section~\ref{sec:conclusions}.

\paragraph{Notation and terminology.}  We indicate a scalar valued variable by normal letters (\ie $a$ or $A$); a bold lowercase letter $\mathbold{a}$ will indicate a vector variable and a bold upper case letter $\mathbold{A}$ a matrix variable. With $a_i$ and $A_{ij}$ we will indicate the eneries of $\mathbold{a}$ and $\A$, respectively. For clarity, if needed we will refer to these entries also as $[\mathbold{a}]_i$ and $[\mathbold{A}]_{i,j}$ and to the $i$-th column of $\A$ as $[\mathbold{A}]_{i}$.
We indicate by $|a|$ the absolute value of $a$ and by $\norm{\mathbold{a}}$ and $\norm{\A}$ the 2-norm and the spectral norm of the vector $\mathbold{a}$ and matrix $\A$, respectively. 
To characterize convergence, we adopt the term \emph{linear convergence}, which asserts that a recursion converges to its stationary value exponentially with time (\ie linearly in a logarithmic scale) \cite{Boyd2004}.


\section{Preliminaries}
\label{sec:preliminaries}

Consider \rev{an undirected} graph $\Graph = (\mathcal{V}, \mathcal{E})$ of $N$ nodes and $M$ edges, where $\mathcal{V}$ indicates the set of nodes and $\mathcal{E}$ the set of edges. Let $\x$ be the graph signal defined on the graph nodes, whose $i$-th component $x_i \in \mathbb{R}$ represents the value of the signal at the $i$-th node, denoted as $u_i \in V$.

\vskip-1mm\paragraph{Graph Fourier transform (GFT).} 
The GFT transforms a graph signal $\x$ into the graph frequency domain $\hat{\x}$ by projecting it into the basis spanned by the eigenvectors of the graph Laplacian $\Lap$, typically defined as the discrete Laplacian $\L_\text{d}$ or the normalized Laplacian $\L_\text{n}$. Since the Laplacian matrix of an undirected graph is symmetric, its eigenvectors $\{\bphi_n\}_{n = 1}^N$ form an orthonormal basis, and the forward and inverse GFTs of $\x$ and $\hat{\x}$ are $\hat{\x} = \bPhi^\transp\x$ and $\x =  \bPhi\hat{\x}$, respectively, 
where the $n$-th column of $\bPhi$ is indicated as $\bphi_n$.The corresponding eigenvalues are denoted as $\{\lambda_n\}_{n = 1}^N$ and will indicate the graph frequencies.
For an extensive review of the properties of the GFT, we refer to~\cite{Shuman2013,Sandryhaila2013}.
To avoid any restrictions on the generality of our approach, in the following, we present our results for a \emph{general representation matrix} $\Lap$.
We only require that $\Lap$ is \emph{symmetric} and \emph{local}: for all $i \neq j$, $L_{ij} = 0$ whenever $u_i$ and $u_j$ are not neighbors and $L_{ij} = L_{ji}$ otherwise. We derive our results for a class of graphs with general Laplacian matrices in some restricted set $\mathcal{L}$. We assume that for every $\L \in \mathcal{L}$ the minimum eigenvalue is bounded below by $\lmin$ and the maximum eigenvalue is bounded above by $\lmax$. Hence, all considered graphs have a bounded spectral norm, i.e., $\|\L\| \leq \varrho = \max\{ |\lmax|, |\lmin|\}$. For instance, when $\L = \L_\text{d}$, we can take $\lmin=0$ and $\lmax = l$, with $l$ related to the maximum degree of any of the graphs. When ${\L} = {\L}_\text{n}$, we can consider $\lmin=0$ and $\lmax = 2$.

\vskip-1mm\paragraph{Graph filters.} A \emph{graph filter} $\H$ is an operator that acts upon a graph signal $\x$ by amplifying or attenuating its graph Fourier coefficients as 
\vspace{-1mm}
\begin{align}
	\H \x = \sum\limits_{n = 1}^N H(\lambda_n) \, \langle \x , \bphi_n \rangle  \bphi_n,
\end{align} 
where $\langle \cdot \rangle$ denotes the usual inner product. Let $\lmin$ and $\lmax$ be the minimum and maximum eigenvalues of $\Lap$ over \emph{all} possible graphs. The graph frequency response $H : [\lmin, \, \lmax] \rightarrow \mathbb{R}$ controls how much $\H$ amplifies the signal component of each graph frequency
\vspace{-1mm}
\begin{align}
	H(\lambda_n) = \langle \H\x, \bphi_n \rangle / \langle \x, \bphi_n \rangle.
\end{align} 
\vspace{-4mm}

We are interested in how we can filter a signal with a graph filter $\H$ having a user-provided frequency response $H^\ast(\lambda)$. Note that this prescribed $H^\ast(\lambda)$ is a continuous function in the graph frequency $\lambda$ and describes the desired response for {\em any} graph. This approach brings benefits in those cases when the underlying graph structure is not known to the designer, or in cases the graph changes in time. The corresponding filter coefficients are thus independent of the graph and \textit{universally} applicable. Using universal filters, we can design a single set of coefficients that instantiate the same graph frequency response $H^\ast(\lambda)$ over different bases. To illustrate the universality property, consider the application of a universal graph filter to two different graphs $\Graph_1$ and $\Graph_2$ of $N_1$ and $N_2$ nodes with graph frequency sets $\{\lambda_{1, n}\}_{n = 1}^{N_1}$ $\{\lambda_{2, n}\}_{n = 1}^{N_2}$, and eigenvectors $\{\bphi_{1,n}\}_{n = 1}^{N_1}$, $\{\bphi_{1,n}\}_{n = 1}^{N_2}$. The filter will attain the same response $H^*(\lambda)$ over both graphs, but, in each case, supported over a different set of graph frequencies: For $\Graph_1$, filtering results in $\sum_{n = 1}^{N_1} H^\ast(\lambda_{1,n}) \, \langle \x , \bphi_{1,n} \rangle  \bphi_{1,n}$, whereas for $\Graph_2$ the filtering operator will be $\sum_{n = 1}^{N_2} H^\ast(\lambda_{2,n}) \, \langle \x , \bphi_{2,n} \rangle  \bphi_{2,n}.$ Thus, the universality lies in the correctness to implement $H^\ast(\lambda)$ on all graphs, which renders it applicable for time-varying graph topologies.
%

\vskip-1mm\paragraph{Distributed implementation.} It is well known that we can \emph{approximate} a universal graph filter $\H$ in a distributed way using a $K$-th order polynomial of $\Lap$, for instance using Chebychev polynomials~\cite{Shuman2011}. Define \FIR{K} as the $K$-th order approximation given by 
\vspace{-1mm}
\begin{align}
	\H = h_0 {\I} + \sum_{k=1}^{K} h_k \Lap^{k}, 
\end{align}
%
where the coefficients $h_i$ can be both found by Chebyshev polynomial fitting~\cite{Shuman2011} or in a least-squares sense, after a (fine) gridding of the frequency range, by minimizing
\begin{equation}
\int_{\lambda} | \sum_{k=0}^K h_k \lambda^{k} - H^\ast(\lambda)|^2 \mathrm{d} \lambda.\vspace{-1mm}
\end{equation}
Observe that, in contrast to traditional graph filters, the order of the considered {\it universal} graph filters is not necessarily limited to $N$. By increasing $K$, we can approximate any filter with square integrable frequency response arbitrarily well. On the other hand, a larger FIR order implies a longer convergence time in a distributed setting, since each node requires information from all its $K$-hop neighbors to attain the desired filter response.

To perform the filter \emph{distributedly} in the network $\Graph$, we assume that each node $u_i\in \mathcal{V}$ is imbued with memory, computation, and communication capabilities and is in charge of calculating the local filter output $[\H\x]_i$. To do so, the node has to its disposal direct access to $x_i$, as well as indirect access to the memory of its neighbors.
For simplicity of presentation, we pose an additional restriction to the computation model: we will assume that nodes operate in synchronous rounds, each one  consisting of a message exchange phase and a local computation phase. In other words, in each round $u_i$ may compute any (polynomial-time computable) function which has as input, variables from its local memory as well as those from the memory of its neighbors in $\Graph$. Since the algorithms examined in this paper are, in effect, dynamical systems, in the following we will adopt the term \emph{iteration} as being synonymous to rounds. Furthermore, we assume that each iteration lasts exactly one time instant. In this context, the \emph{convergence time} of an algorithm is measured in terms of the number of iterations the network needs to perform until the output closely reaches the \emph{steady-state}, \ie the asymptotic output of the dynamical system.

The computation of \FIR{K} is easily performed distributedly. Since $\Lap^K \x = \Lap \left(\Lap^{K-1} \x \right) $, each node $u_i$ can compute the $K$-th term from the values of the $(K-1)$-th term in its neighborhood, in an iterative manner. 
The algorithm performing the FIR$_K$ graph filter terminates after $K$ iterations, and if a more efficient recursive implementation is used~\cite{Shuman2011}, in total, each node $u_i$ exchanges $K \deg{u_i}$ values with its neighbors, meaning that, overall, the network exchanges $N K \deg{u_i}$ variables, amounting to a communication cost of $O(M K)$. Further, since for this computation each node keeps track of the values of its neighbors at every iteration, the network has a memory complexity of $O(M)$. 

However, \FIR{K} filters exhibit poor performance when the graph signal or/and graph topology are time-varying, since 
the intermediate steps of the recursion cannot be computed exactly. This is for two reasons: i) First, the distributed averaging is paused after $K$ iterations, and thus the filter output is \emph{not a steady state} of the iteration $\y_t = \Lap \y_{t-1}$, which for $t = K$ gives $\y_K = \Lap^K \x$ as above. Accumulated errors in the computation alter the trajectory of the dynamical system, rendering intermediate states and the filter output erroneous. ii) Second, the input signal is only considered during the first iteration. To track a time-varying signal $\x_1, \x_2, \ldots, \x_t$, a new FIR filter should be started at each time step $t$ having $\x_t$ as input, significantly increasing the computation, message and memory complexities.

To overcome these issues and provide a more solid foundation for graph signal processing, we study \ARMA{} graph filters. 
\vspace{-2mm}


\section{ARMA Graph Filters}
\label{sec:arma}
This section contains our main algorithmic contributions.
First, Sections~\ref{subsec:ARMA1} and~\ref{subsec:ARMAK} present distributed algorithms for implementing filters with a complex rational graph frequency response
\begin{align}
	H(\lambda) = \frac{p_n(\lambda)}{p_d(\lambda)} = \frac{ \sum_{k=0}^{K} b_k \lambda^k }{1 + \sum_{k=1}^{K} a_k \lambda^k},
	\label{eq:rational}
\end{align}  
where $p_n(\lambda)$ and $p_d(\lambda)$ are the complex numerator and denominator polynomials of order $K$. \rev{Note that this structure resembles the frequency response of temporal ARMA filters \cite{Hayes2009}, in which case $\lambda = e^{j\omega}$, with $\omega$ being the temporal frequency.}  Though both polynomials are presented here to be of the same order, this is not a limitation: different orders for $p_n(\lambda)$ and $p_d(\lambda)$ are achieved trivially by setting specific constants $a_k$ or $b_k$ to zero. 

%
\subsection{\ARMA{1} graph filters}
\label{subsec:ARMA1}

Before describing the full fledged \ARMA{K} filters, it helps first to consider a 1-st order graph filter. Besides being simpler in its construction, an \ARMA{1} lends itself as the basic building block for creating filters with a rational frequency response of any order (\cf Section~\ref{subsec:ARMAK}).  
We obtain \ARMA{1} filters as an extension of the potential kernel \cite{Loukas2013}. Consider the following 1-st order recursion
\begin{subequations}
\label{eq:ARMA1}
\begin{empheq}{align}
  & \y_{t+1} = \psi\L\y_t + \varphi\x \label{eq:ARMA1_1} \\
  & \z_{t+1} = \y_{t+1} +  c \x,	\label{eq:ARMA1_2}
\end{empheq}
\end{subequations}
for arbitrary $\y_{0}$ and 
where the coefficients $\psi$, $\varphi$ and $c$ are complex numbers (to be specified later on). For this recursion, we can prove our first result.
\begin{theorem}
\label{theorem:ARMA1}
The frequency response of the \ARMA{1} is 
\begin{align}
	H(\lambda) = c + \frac{r}{\lambda - p}, \quad \text{subject to} \quad |p| > \mrad
	\label{eq:ARMA1_response}
\end{align}
with residue $r$ and pole $p$ given by $r = -\varphi/\psi$ and $p = 1/\psi$, respectively, \rev{and with $\mrad$ being the spectral radius bound of $\L$.} Recursion~\eqref{eq:ARMA1} converges to it linearly, irrespective of the initial condition $\y_0$ and graph Laplacian $\L$.
\end{theorem}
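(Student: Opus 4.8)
The plan is to analyze the linear recursion~\eqref{eq:ARMA1_1} in the graph-frequency (eigenvector) domain, where it decouples into $N$ scalar recursions, one per eigenvalue $\lambda_n$. First I would expand $\y_t$ and $\x$ in the Laplacian eigenbasis, writing $\y_t = \sum_n \hat{y}_{t,n}\bphi_n$ and $\x = \sum_n \hat{x}_n \bphi_n$. Substituting into~\eqref{eq:ARMA1_1} and using $\L\bphi_n = \lambda_n\bphi_n$ gives, for each $n$, the scalar recursion $\hat{y}_{t+1,n} = \psi\lambda_n \hat{y}_{t,n} + \varphi\hat{x}_n$. This is a first-order linear difference equation whose behavior is governed entirely by the modulus of the coefficient $\psi\lambda_n$.

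Next I would solve this scalar recursion explicitly: $\hat{y}_{t,n} = (\psi\lambda_n)^t \hat{y}_{0,n} + \varphi\hat{x}_n \sum_{k=0}^{t-1}(\psi\lambda_n)^k$. The key step is to observe that the condition $|p| > \mrad$, i.e. $|1/\psi| > \mrad \geq |\lambda_n|$ for every eigenvalue of every admissible $\L$, is exactly equivalent to $|\psi\lambda_n| < 1$. Under this condition the transient term $(\psi\lambda_n)^t \hat{y}_{0,n}$ vanishes geometrically — this gives linear convergence, uniformly in $n$ and independently of $\y_0$ — and the geometric series converges to $1/(1-\psi\lambda_n)$. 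Hence $\hat{y}_{t,n} \to \varphi\hat{x}_n/(1-\psi\lambda_n)$, and from~\eqref{eq:ARMA1_2}, $\hat{z}_{t,n} \to \big(c + \varphi/(1-\psi\lambda_n)\big)\hat{x}_n$. Reading off the steady-state frequency response and rewriting $\varphi/(1-\psi\lambda) = (-\varphi/\psi)/(\lambda - 1/\psi) = r/(\lambda - p)$ with $r=-\varphi/\psi$, $p=1/\psi$ yields~\eqref{eq:ARMA1_response}.

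Finally I would convert the per-mode statements back to a graph-signal statement: since the eigenvectors form an orthonormal basis (the paper notes $\L$ is symmetric), $\norm{\z_t - \z_\infty}^2 = \sum_n |\hat{z}_{t,n}-\hat{z}_{\infty,n}|^2$, and each term decays like $|\psi\lmax|^{2t}$ up to constants, so the whole vector converges linearly with rate governed by $\max_n |\psi\lambda_n| \le |\psi|\mrad < 1$. I should also note the edge case $\lambda_n = p$ does not arise precisely because $|p| > \mrad$ keeps the pole off the spectrum.

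I do not expect a serious obstacle here; the only thing requiring slight care is making the convergence claim genuinely uniform over all admissible graphs — that is, the rate bound must depend only on $\psi$ and $\mrad$, not on the particular $\L$ — which is immediate from $|\psi\lambda_n|\le|\psi|\mrad$, and making sure the "irrespective of $\y_0$" claim is clean, which follows because the limit above does not involve $\hat{y}_{0,n}$ at all.
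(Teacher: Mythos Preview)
Your proposal is correct and follows essentially the same argument as the paper: both unroll the recursion, invoke the geometric-series limit under the stability condition $|\psi|\mrad<1\Leftrightarrow|p|>\mrad$, and read off the response $c+\varphi/(1-\psi\lambda)=c+r/(\lambda-p)$. The only cosmetic difference is ordering --- the paper expands the recursion at the matrix level and then diagonalizes via Sylvester's theorem, whereas you diagonalize first and work with the $N$ decoupled scalar recursions --- but the content and the key steps are identical.
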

\begin{proof}
We can write the recursion~\eqref{eq:ARMA1_1} at time $t$ in the expanded form as 
\vspace{-2mm}
\begin{equation}
 \label{ARMA1Ext}
 \y_{t} = (\psi\L)^{t}\y_0 + \varphi\sum_{\tau = 0}^{t-1}(\psi\L)^\tau \x.
\end{equation}
When $\abs*{\psi \mrad} < 1$ and as $t\rightarrow \infty$ this recursion approaches the steady state
\vspace{-2mm}
\begin{equation}
 \label{ARMA1Inf}
 \y = \lim_{t\to\infty}\y_{t} = \varphi\sum_{\tau = 0}^{\infty}(\psi\L)^\tau \x = \varphi\left(\I - \psi\L\right)^{-1}\x,
\end{equation}
irrespective of $\y_0$. Based on Sylvester's matrix theorem, the matrix $\I - \psi\L$ has the same eigenvectors as $\L$ and its eigenvalues are equal to $1 - \psi\lambda_n$. It is also well known that invertible matrices have the same eigenvectors as their inverse and eigenvalues that are the inverse of the eigenvalues of their inverse. Thus, 
\vspace{-2mm}
\begin{equation}
 \label{SpecExpARMA1}
 \z = \lim_{t\to\infty}\z_{t} = \y + c\x =\sum_{n = 1}^{N}\left( c + \frac{\varphi}{1-\psi\lambda_n} \right) \hat{x}_n \bphi_n ,
\end{equation}
and the desired frequency response~\eqref{eq:ARMA1_response} follows by simple algebra. We arrive at \eqref{SpecExpARMA1} by considering a specific realization of $\L$, thus the set of eigenvalues $\lambda_n \in [\lmin, \lmax]$ is discrete. However, the same result is achieved for every other graph realization matrix $\L$ with a potentially different set of eigenvalues, still in $[\lmin, \lmax]$. Thus, we can write \eqref{eq:ARMA1_response} for all $\lambda \in [\lmin, \lmax]$.
\end{proof}
\vspace{-1mm}

The stability constraint in~\eqref{eq:ARMA1_response} can be also understood from a dynamical system perspective. Comparing recursion~\eqref{eq:ARMA1} to a discrete-time state-space equation, it becomes apparent that, when the condition $|\psi\varrho| < 1$ holds, recursion~\eqref{eq:ARMA1} achieves frequency response \eqref{eq:ARMA1_response}. It is useful to observe that, since $|p| > \mrad $, an increment of the stability region can be attained, if we work with a shifted version of the Laplacian $\Lap$ and thereby decrease the spectral radius bound $\mrad$. For instance, the following shifted Laplacians can be considered: $\L = \L_\text{d} - l/2\I$ with $\lmin = -l/2$ and $\lmax = l/2$ or $\L = \L_\text{n} - \I$ with $\lmin = -1$ and $\lmax = 1$. Due to its benefits, we will use the shifted versions of the Laplacians in the filter design phase and numerical results.\footnote{\rev{Note that from Sylvester's matrix theorem, the shifted version of the Laplacians share the same eigenvectors as the original ones.}}

\rev{Recursion~\eqref{eq:ARMA1} leads to a simple distributed implementation of a graph filter with 1-st order rational frequency response: at each iteration $t$, each node $u_i$ updates its value $y_{t,i}$ based on its local signal $x_i$ and a weighted combination of the values $y_{t-1, j}$ of its neighbors $u_j$.} Since each node must exchange its value with each of its neighbors, the message/memory complexity at each iteration is of order $O\left(M\right)$, \rev{which leads to an efficient implementation.}

\revv{
\begin{remark}\label{rem.equiv} Note that there is an \emph{equivalence} between the ARMA$_1$ filter and FIR filters in approximating rational frequency responses. Indeed, the result obtained in \eqref{ARMA1Inf} from the ARMA in $t \to \infty$ can also be achieved with an FIR filter of order $K = \infty$. Further, from \eqref{ARMA1Ext} we can see that in finite time, \ie $t = T$ and $\y_0 = 0$ the ARMA$_1$ output is equivalent to an FIR$_{T-1}$ filter with coefficients $[\varphi, \varphi\psi, \ldots, \varphi\psi^{T-1}]$.
\end{remark} 
This suggests that: $(i)$ the same output of an FIR filter can be obtained form \eqref{eq:ARMA1} and (ii) the ARMA$_1$ graph filter can be used to design the FIR coefficients to approximate frequency responses of the form \eqref{eq:ARMA1_response}. As we will see later on, due to their implementation form \eqref{eq:ARMA1}, the ARMA filters are more robust than FIRs in a time-varying scenario (time-varying graph and/or time-varying graph signal).}

 \vspace{-3mm}
\subsection{\ARMA{K} graph filters}
\label{subsec:ARMAK}

Next, we use \ARMA{1} as a building block to derive distributed graph filters with a more complex frequency response. We present two constructions: The first uses a \emph{parallel} bank of $K$ \ARMA{1} filters, attaining linear convergence with a communication and memory cost per iteration of $O(K M)$. \rev{The second uses \emph{periodic} coefficients in order to reduce the communication costs to $O(M)$, \revv{while preserving the linear convergence as the parallel ARMA$_K$ filters.}}

\paragraph{Parallel \ARMA{K} filters.} A larger variety of filter responses can be obtained by adding the outputs of a parallel \ARMA{1} filter bank. Let's denote with superscript $(k)$ the terms that correspond to the $k$-th \ARMA{1} filter for $k = 1, 2, \ldots, K$. With this notation in place, the output $\z_{t}$ of the \ARMA{K} filter at time instant $t$ is
\begin{subequations}
\label{eq:ARMAK}
\begin{empheq}{align}
  & \y^{(k)}_{t+1} = \psi^{(k)} \L \y^{(k)}_t + \varphi^{(k)}\x \label{eq:ARMAK_1} \\
  & \z_{t+1} = \sum_{k = 1}^{K} \y^{(k)}_{t+1} +  c \x,	\label{eq:ARMAK_2}
\end{empheq}
\end{subequations}

where $\y^{(k)}_{0}$ is arbitrary.
\begin{theorem}
\label{cor:ARMAK_parallel}
The frequency response of a parallel \ARMA{K} is
\begin{equation}\label{eq:ARMA_Kpartial}
	H\left(\lambda\right) = c + \sum_{k = 1}^{K}\frac{r_k}{\lambda - p_k}\quad \text{subject to} \quad	|p_k| > \mrad,
\end{equation}
with the residues $r_k = -\varphi^{(k)}/\psi^{(k)}$ and poles $p_k = 1/\psi^{(k)}$, and $\mrad$ the spectral radius of $\Lap$. Recursion~\eqref{eq:ARMAK} converges to it linearly, irrespective of the initial conditions $\y_0^{(k)}$ and graph Laplacian $\L$.
\end{theorem}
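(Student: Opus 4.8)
The plan is to reduce the parallel \ARMA{K} statement to Theorem~\ref{theorem:ARMA1} by exploiting the linearity of the recursion~\eqref{eq:ARMAK}. Since the $K$ branches in~\eqref{eq:ARMAK_1} evolve independently — the $k$-th branch $\y^{(k)}_{t+1} = \psi^{(k)}\L\y^{(k)}_t + \varphi^{(k)}\x$ does not couple to any other branch — each one is exactly an \ARMA{1} recursion of the form~\eqref{eq:ARMA1_1} with coefficients $\psi^{(k)}, \varphi^{(k)}$ (and a ``$c$-part'' that we lump into the final summation~\eqref{eq:ARMAK_2}). First I would invoke the proof of Theorem~\ref{theorem:ARMA1} verbatim on each branch: provided $\abs{\psi^{(k)}\mrad} < 1$, equivalently $|p_k| = |1/\psi^{(k)}| > \mrad$, the branch converges to the steady state $\y^{(k)} = \varphi^{(k)}(\I - \psi^{(k)}\L)^{-1}\x$, linearly and independently of $\y_0^{(k)}$ and of $\L$.

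Next I would sum the branch steady states and add $c\x$ as prescribed by~\eqref{eq:ARMAK_2}. Using the spectral decomposition exactly as in~\eqref{SpecExpARMA1}, all matrices $(\I - \psi^{(k)}\L)^{-1}$ share the eigenvectors of $\L$, so
\begin{equation}
\z = \lim_{t\to\infty}\z_t = \sum_{n=1}^N\left(c + \sum_{k=1}^K \frac{\varphi^{(k)}}{1 - \psi^{(k)}\lambda_n}\right)\hat{x}_n\bphi_n,
\end{equation}
and the claimed response~\eqref{eq:ARMA_Kpartial} follows by the same algebraic rewriting $\varphi^{(k)}/(1 - \psi^{(k)}\lambda) = r_k/(\lambda - p_k)$ with $r_k = -\varphi^{(k)}/\psi^{(k)}$ and $p_k = 1/\psi^{(k)}$ used in the \ARMA{1} case. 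As before, this identity is first established on the discrete eigenvalue set of one graph realization, but since it holds for every realization whose spectrum lies in $[\lmin,\lmax]$, it extends to all $\lambda \in [\lmin,\lmax]$.

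For the convergence claim, I would argue that the output error $\z_t - \z$ is the sum of the $K$ branch errors $\y^{(k)}_t - \y^{(k)}$, each of which decays linearly with rate governed by $\abs{\psi^{(k)}\mrad} = \mrad/|p_k| < 1$; hence $\z_t \to \z$ linearly with overall rate $\max_k \mrad/|p_k|$, again irrespective of all initial conditions and of $\L$ within the admissible class. Since everything here is essentially an $K$-fold repetition of already-proven facts, there is no serious obstacle; the only point requiring a word of care is that the stability condition must hold \emph{simultaneously} for all $k$ (each pole strictly outside the spectral-radius disk), which is exactly the constraint $|p_k| > \mrad$ appearing in~\eqref{eq:ARMA_Kpartial}, and that the convergence rate is dictated by the worst (slowest) branch.
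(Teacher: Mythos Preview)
Your proposal is correct and takes essentially the same approach as the paper, which simply states that the result ``follows straightforwardly from Theorem~\ref{theorem:ARMA1}.'' You have merely spelled out the straightforward details: apply Theorem~\ref{theorem:ARMA1} independently to each of the $K$ decoupled branches, sum the resulting steady states, and read off the frequency response and convergence rate.
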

\begin{proof}
Follows straightforwardly from Theorem~\ref{theorem:ARMA1}.
\end{proof}
The frequency response of a parallel \ARMA{K} is therefore a rational function with numerator and denominator polynomials of order $K$ (presented here in a partial fraction form). 
In addition, since we are simply running $K$ \ARMA{1} filters in parallel, the communication and memory complexities are $K$ times that of the \ARMA{1} graph filter. \revv{Note also that the same considerations of Remark~\ref{rem.equiv} can be extended to the parallel ARMA$_K$ filter.}

\paragraph{Periodic \ARMA{K} filters.} We can decrease the memory requirements of the parallel implementation by letting the filter coefficients periodically vary in time.
Our periodic filters take the following form
\vspace{-1mm}
\begin{subequations}
\label{eq:ARMAK_periodic}
\begin{empheq}{align}
  & \y_{t+1} = (\theta_t {\bf I} + \psi_t \L) \y_{t} + \varphi_t \x \label{eq:ARMAK_periodic_1} \\
  & \z_{t+1} = \y_{t+1} +  c \x,	\label{eq:ARMAK_periodic_2}
\end{empheq}
\end{subequations}
where $\y_0$ is the arbitrary, the output $\z_{t+1}$ is valid every $K$ iterations, and coefficients $\theta_t, \psi_t, \varphi_t$ are periodic with period $K$:	$\theta_t = \theta_{t-iK}, \psi_t = \psi_{t-iK}, \varphi_t = \varphi_{t-iK}$, with $i$ an integer in $[0, t/K]$. 
\vspace{-1mm}

\begin{theorem}\label{theo:periodic_static}
The frequency response of a periodic \ARMA{K} filter is 
\vspace{-1mm}
\begin{align}
	H(\lambda) &= c + \frac{ \sum\limits_{k = 0}^{K-1} \left(\prod\limits_{\tau = k+1}^{K-1} \hspace{-1mm}\left(\theta_{\tau} + \psi_{\tau}\lambda\right)  \right) \varphi_{k}}{1 - \prod\limits_{k = 0}^{K-1} \left(\theta_{k} + \psi_{k}\lambda\right)},
	\label{eq:periodic_response}
\end{align}
\rev{subject to the stability constraint 
\begin{equation}
\label{eq.stabl_period}
\left| \prod_{k = 0}^{K-1} \left(\theta_{k} + \psi_{k} \mrad\right) \right| <  1
\end{equation}
with $\mrad$ being the spectral radius bound of $\Lap$.
}
Recursion~\eqref{eq:ARMAK_periodic} converges to it linearly, irrespective of the initial condition $\y_0$ and graph Laplacian $\Lap$.
\label{theorem:ARMAK_periodic}
\end{theorem}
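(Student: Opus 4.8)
The plan is to unroll the periodic recursion over one full period of length $K$ and express the state $\y_{t+K}$ as an affine map of $\y_t$, then pass to the limit. Concretely, iterating \eqref{eq:ARMAK_periodic_1} from time $t$ to $t+K$ gives
\begin{equation}
\y_{t+K} = \left(\prod_{k=0}^{K-1}(\theta_k\I + \psi_k\L)\right)\y_t + \left(\sum_{k=0}^{K-1}\left(\prod_{\tau=k+1}^{K-1}(\theta_\tau\I+\psi_\tau\L)\right)\varphi_k\right)\x,
\end{equation}
where the product is taken in the order of decreasing index (matrices commute here since they are all polynomials in $\L$, so the ordering is immaterial, but I would keep it explicit to match \eqref{eq:periodic_response}). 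Call the matrix coefficient of $\y_t$ above $\bM := \prod_{k=0}^{K-1}(\theta_k\I+\psi_k\L)$ and the vector term $\bM_{\x}\x$. This exhibits the once-per-period subsampled sequence $\y_0,\y_K,\y_{2K},\dots$ as a first-order (constant-coefficient) linear recursion, exactly of the form treated in Theorem~\ref{theorem:ARMA1}.

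Next I would invoke the spectral argument: since $\bM$ is a polynomial in $\L$, by Sylvester's theorem it shares the eigenvectors $\bphi_n$ of $\L$, with eigenvalues $\prod_{k=0}^{K-1}(\theta_k+\psi_k\lambda_n)$. The subsampled recursion converges iff the spectral radius of $\bM$ is strictly less than one, i.e. iff $\bigl|\prod_{k=0}^{K-1}(\theta_k+\psi_k\lambda_n)\bigr|<1$ for all $n$; since $|\lambda_n|\le\mrad$ over all admissible graphs and the map $\lambda\mapsto\prod_k(\theta_k+\psi_k\lambda)$ attains its maximum modulus on the boundary of the disk of radius $\mrad$, this is guaranteed by \eqref{eq.stabl_period}. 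Under that condition $(\I-\bM)$ is invertible and
\begin{equation}
\y := \lim_{i\to\infty}\y_{iK} = (\I-\bM)^{-1}\bM_{\x}\x,
\end{equation}
independent of $\y_0$, with geometric (linear) convergence at rate equal to the spectral radius of $\bM$. Projecting onto each $\bphi_n$ turns the matrix inverse into the scalar $1/(1-\prod_k(\theta_k+\psi_k\lambda_n))$, multiplying the corresponding eigenvalue of $\bM_{\x}$, which is $\sum_{k=0}^{K-1}(\prod_{\tau=k+1}^{K-1}(\theta_\tau+\psi_\tau\lambda_n))\varphi_k$; adding the $c\x$ term from \eqref{eq:ARMAK_periodic_2} then yields exactly \eqref{eq:periodic_response}. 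As in the proof of Theorem~\ref{theorem:ARMA1}, the identity holds for every graph realization with eigenvalues in $[\lmin,\lmax]$, hence extends to all $\lambda\in[\lmin,\lmax]$.

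The one point needing a little care — and the main obstacle — is that the output $\z_{t+1}$ is only declared valid every $K$ iterations, so I must argue that reading out at times $t\equiv 0 \pmod K$ genuinely tracks the limit of the subsampled sequence, and that the intermediate states $\y_{iK+1},\dots,\y_{iK+K-1}$ (driven by the other coefficient phases) do not spoil convergence. This follows because each intermediate state is a fixed affine image of $\y_{iK}$, so $\y_{iK+j}$ converges as $i\to\infty$ for every fixed $j\in\{0,\dots,K-1\}$, and the whole periodic system is asymptotically periodic with the claimed per-period behavior; the convergence is linear with rate $\mrad_{\bM}^{1/K}$ per single iteration, where $\mrad_{\bM}$ is the spectral radius of $\bM$. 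I would also briefly note that any convenient choice of the product ordering in \eqref{eq:periodic_response} is legitimate precisely because all factors are polynomials in the single matrix $\L$ and therefore commute.
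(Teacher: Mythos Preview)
Your proposal is correct and follows essentially the same route as the paper: unroll one full period to obtain the time-invariant affine recursion $\y_{(i+1)K} = \A\,\y_{iK} + \B\,\x$ (the paper's $\A,\B$ are exactly your $\M$ and $\M_{\x}$), reduce to the \ARMA{1} analysis via Sylvester's theorem, and read off the response from $(\I-\A)^{-1}\B + c\I$. Your extra remarks on commutativity of the factors and on convergence of the intermediate phases $\y_{iK+j}$ go slightly beyond what the paper makes explicit, but the core argument is identical.
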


\vspace{1mm}
(The proof is deferred to the appendix.)

\vspace{2mm} By some algebraic manipulation, we can see that the frequency response of periodic \ARMA{K} filters is also a rational function of order $K$.
\rev{We can also observe that the stability criterion of parallel \ARMA{K} is more involved than that of the parallel implementation. As now we are dealing with $K$ \ARMA{1} graph filters interleaved in time, to guarantee their joint stability one does not necessarily have to examine them independently (requiring for instance that, for each $k$, $\abs{\theta_k + \psi_k \varrho} < 1)$. Instead, it is sufficient that the product $\left| \prod_{k = 0}^{K-1} \left(\theta_{k} + \psi_{k} \mrad\right) \right|$ is smaller than one. To illustrate this, notice that if $\theta_{k} = 0$, the periodic \ARMA{K} can be stable even if some of the \ARMA{1} graph filters it is composed of are unstable. }

{When computing a periodic \ARMA{K} distributedly, in each iteration each node $u_i$ stores and exchanges }$\deg(u_i)$ values with its neighbors, yielding a memory complexity of $O(M)$, rather than the $O(K M)$ of the parallel one (after each iteration, the values are overwritten). On the other hand, since the output of the periodic \ARMA{K} is only valid after $K$ iterations, the communication complexity is again $O(KM)$. The low memory requirements of the periodic \ARMA{K} render it suitable for resource constrained devices.


\section{ARMA Filter Design}
\label{sec:design}

In this section we focus on selecting the coefficients of our filters. We begin by showing how to approximate any given frequency response with an \ARMA{} filter, using a variant of Shanks' method~\cite{Shanks1967}. This approach gives us stable filters, ensuring the same selectivity as the universal FIR graph filters. Section~\ref{subsec:gra_den_interp} then provides explicit (and exact) filter constructions for two graph signal processing problems which were up-\rev{to-now only approximated: \emph{Tiknohov and Wiener-based signal denoising} and \emph{interpolation under smoothness assumptions}~\cite{Shuman2011, Chen2014} and~\cite{Narang2013}.}
\vspace{-3mm}

\subsection{The Filter Design Problem}
\label{subsec:design}

Given a graph frequency response $H^\ast:$ $[\lmin,\, \lmax ]$ $\rightarrow \mathbb{R}$ and filter order $K$, our objective is to find the complex polynomials $p_n(\lambda)$ and $p_d(\lambda)$ of order $K$ that minimize
\begin{align}
	\int_{\lambda}  \abs*{ {\frac{p_n(\lambda)}{p_d(\lambda)} - H^\ast\hspace{-0.5mm}(\lambda)}}^2 \textnormal{d}\lambda &= &\hspace{-3mm} \int_{{\lambda}} \hspace{-0.5mm}\Bigg|\frac{ \sum\limits_{k=0}^{K} b_k \lambda^k }{1 + \sum\limits_{k=1}^{K} a_k \lambda^k} -H^\ast\hspace{-0.5mm}(\lambda) \Bigg|^2 \textnormal{d}\lambda
\end{align}
while ensuring that the chosen coefficients result in a stable system (see constraints in Theorems~\ref{cor:ARMAK_parallel} and~\ref{theorem:ARMAK_periodic}). From $p_n(\lambda)/p_d(\lambda)$ one computes the filter coefficients $(\psi^{(k)}, \varphi^{(k)}, c$ or $\theta_t, \psi_t, c)$ by algebraic manipulation.

\begin{remark} Even if we constrain ourselves to pass-band filters and we consider only the set of $\Lap$ for which $\mrad = 1$, it is impossible to design our coefficients based on classical design methods developed for IIR  filters (\eg Butterworth, Chebyshev). The same issue is present also using a rational fitting approach, \eg Pad\'e and Prony's method. \rev{This is due to the fact that, now, the filters are rational in the variable $\lambda$ and the notion of frequency does not stem from $j\omega$ nor from $e^{j\omega}$. This differentiates the problem from the design of continuous and discrete time filters. Further, the stability constraint of \ARMA{K} is different from classical filter design, where the poles of the transfer function must lie within (not outside) the unit circle.}
\end{remark}

\begin{figure}[t]
\centering
\includegraphics[width=.95\columnwidth]{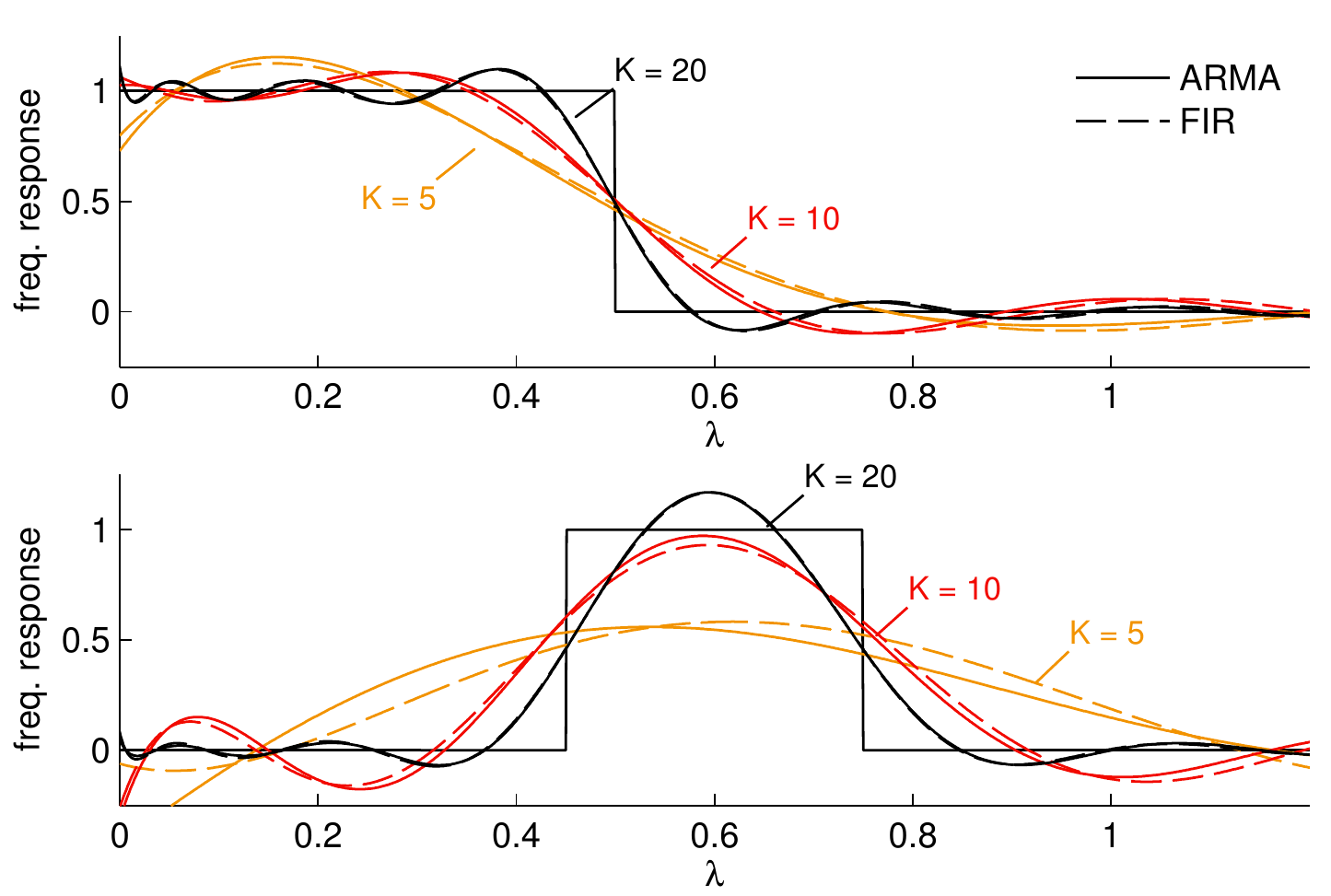}
\caption{The frequency response of \ARMA{K} filters designed by Shanks' method and the FIR responses of corresponding order. Here, $H^\ast(\lambda)$ is a step function (top) and a window function (bottom).}
\vspace{-3mm}
\label{fig:responses}
\vskip-1mm
\end{figure}

To illustrate this remark, consider the Butterworth-like graph frequency response $h(\lambda) = \left(1 + (\lambda/\lambda_{c})^K\right)^{-1}$, where $\lambda_c$ is the desired cut-off frequency. \rev{For $K = 2$, one finds that it has two complex \revv{conjugate poles at $\pm \jmath\lambda_c$.}} Thus, the behavior of these filters depends on the cut-off frequency and stability is not always guaranteed. \rev{For this particular case, and for a parallel implementation, whenever $\lambda_c > \varrho$ the filters are not stable.}%

\paragraph{Design method.} Similar to Shanks' method, we approximate the filter coefficients as follows:

\vskip2mm\emph{Denominator.} Determine $a_k$ for $k = 1, \ldots, K$ by finding a $\hat{K} > K$ order polynomial approximation $\hat{H}(\lambda) = \sum_{k=0}^{\hat{K}} g_k \lambda^k$ of $H^\ast(\lambda)$ using polynomial regression, and solving the coefficient-wise system of equations $p_d(\lambda) \hat{H}(\lambda) = p_n(\lambda)$. 

\vskip2mm\emph{Numerator.} Determine $b_k$ for $k = 1, \ldots, K$ by solving the least-squares problem of minimizing $\int_{{\lambda}}|p_n(\lambda)/p_d(\lambda) - H^\ast(\lambda)|^2 d\mu$, \wrt $p_n(\lambda)$. 

\vskip2mm
\revv{Once the numerator ($b_k$) and denominator ($a_k$) coefficients of the target rational response are found:}

\revv{\vskip2mm\emph{(i) Parallel design.} Perform the partial fraction expansion to find the residuals ($r_k$) and poles ($p_k$). Then, the filter coefficients $\psi^{(k)}$ and $\varphi^{(k)}$ can be found by exploiting their relation with $r_k$ and $p_k$ in Theorem~\ref{cor:ARMAK_parallel}.}

\revv{\vskip2mm\emph{(ii) Periodic design.} Identify $\psi_k$ by computing the roots of the (source) denominator polynomial $1 - \prod_{k = 0}^{K-1} \left(\theta_{k} + \psi_{k}\lambda\right)$ in \eqref{eq:periodic_response} and equating them to the roots of the (target) denominator $1 + \sum_{k=1}^{K} a_k \lambda^k$. It is suggested to set $\theta_1 = 0$ and $\theta_k = 1$ for $k>0$, which has the effect of putting the two polynomials in similar form. Once coefficients $\psi_k$ (and $\theta_k$) have been set, we obtain $\varphi_k$ by equating the numerator target and source polynomials. 
}

\vskip3mm 
The method is also suitable for numerator and denominator polynomials of different orders. We advocate however the use of equal orders, because it yields the highest approximation accuracy for a given communication/memory complexity.

The most crucial step is the approximation of the denominator coefficients. By fitting $p_d(\lambda)$ to $\hat{g}(\lambda)$ instead of $g(\lambda)$, we are able to compute coefficients $a_k$ independently of $b_k$. 
Increasing $\hat{K} \gg K$ often leads to a (slight) increase in accuracy, but at the price of slower convergence and higher sensitivity to numerical errors (such as those caused by packet loss). Especially for sharp functions, such as the ones shown in Fig.~\ref{fig:responses}, a high order polynomial approximation results in very large coefficients, which affect the numerically stability of the filters and push the poles closer to the unit circle. For this reason, in the remainder of this paper we set $\hat{K} = K + 1$. 


Though the proposed design method does not come with theoretical stability guarantees, it has been found to consistently produce stable filters\footnote{This has been observed while working with shifted Laplacians, and especially with the shifted normalized Laplacian $\L = \L_\text{n} - \I$.}. Fig.~\ref{fig:responses} illustrates in solid lines the frequency responses of three \ARMA{K} filters ($K=5,10,20$), designed to approximate a step function (top) and a window function (bottom). \rev{For reproducibility, Table I, which is featured in the Appendix, summarizes the filter coefficients of the parallel implementation for different $K$.} 

\begin{figure}[t]
\centering
\includegraphics[width=.95\columnwidth]{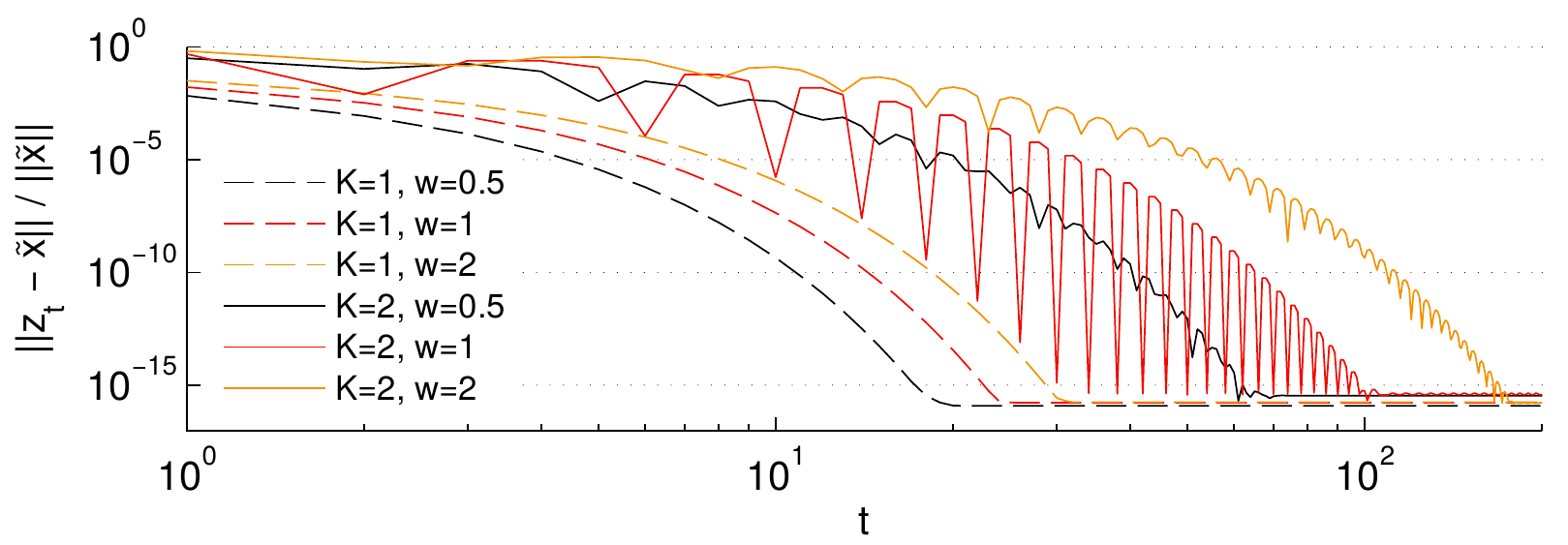}
\vskip-1mm
\caption{Convergence of a denoising parallel \ARMA{K} filter for $K = 1, 2$ and $w = 0.5, 1, 2$. The filtering error is $\norm{\z_t-\tilde{\x}} / \norm{\tilde{\x}}$, where for each parameter pair $(K,w)$, $\tilde{\x}$ is the solution of the denoising problem, and $\z_t$ is the filter output after $t$ rounds. The residual error is a consequence of the computer's bounded numerical precision.}
\vspace{-3mm}
\label{fig:comparison}
\vskip-1mm
\end{figure}

\subsection{Exact and Universal Graph Filter Constructions}
\label{subsec:gra_den_interp}

We proceed to present exact (and in certain cases explicit) graph filter constructions for particular graph signal denoising and interpolation problems. In contrast to previous work, the proposed filters are \emph{universal}, that is they are designed without knowledge of the graph structure. Indeed, the filter coefficients are found independently from the eigenvalues of the graph Laplacian. This makes the ARMA filters suitable for any graph, and ideal for cases when the graph structure is unknown or when the $O(N^3)$ complexity of the eigenvalue decomposition becomes prohibitive.

\paragraph{Tikhonov-based denoising.} Given a noisy signal $\t = \x + \n$, where $\x$ is the true signal and $\n$ is noise, the objective is to recover $\x$~\cite{Shuman2011,Shuman2013,Chen2014}. When $\x$ is smooth \wrt the graph, denoising can be formulated as the regularized problem
\begin{align}
	\tilde{\x} = \argmin_{\x \in \mathbb{C}^N} \norm{\x - \t}_2^2 + w \, \x^\top \L^K \x,
	\label{eq:denoising}
\end{align}
where the first term asks for a denoised signal that is close to $\t$, and the second uses the quadratic form of $\L^K$ to penalize signals that are not smooth. \rev{In \eqref{eq:denoising}, admitted choices of $\L$ are limited to the discrete Laplacian $\L_\text{d}$ or the normalized Laplacian $\L_\text{n}$ (without shifting).} The positive regularization weight $w$ allows us to trade-off between the two terms of the objective function. Being a convex problem, the global minimum $\tilde{\x}$ is found by setting the gradient to zero, resulting in 
\begin{align}
	\tilde{\x} &= \left(\I + w \L^K \right)^{-1} \t = \sum_{n = 1}^N \frac{1}{1 + w \lambda_n^K}  \langle\t,\bphi_n\rangle \bphi_n.
	\label{eq.solARMAK}    
\end{align}
\rev{It follows that \eqref{eq.solARMAK} can be approximated with an \ARMA{K} with frequency response
\begin{align}\label{eq:den_ARMA}
	H(\lambda) &= \frac{1}{1 + w \lambda^K } = \frac{1}{\prod_{k = 1}^K ( \lambda - p_k) } 
\end{align}
\rev{with $p_k = - e^{\jmath\gamma_k}/\sqrt[K]{w}$ and $\gamma_k = (2k + 1)\pi/K$.} From the stability condition of the parallel \ARMA{K}, we have stable filters as long as that $\abs{p_k} > \varrho$, which for the particular expression of $p_k$ becomes $\sqrt[K]{w}\mrad < 1$.} 

\rev{The solution of \eqref{eq.solARMAK} can also computed by an \ARMA{K} implemented on the shifted Laplacians, with a notable improvement on the stability of the filters. For $\L = \L_\text{d} - l/2\I$ we can reformulate  \eqref{eq:den_ARMA} as

\begin{align}\label{eq:den_ARMA_ref}
	H(\lambda) &= \frac{1}{1 + w (\lambda + \frac{l}{2})^K } = \frac{1}{\prod_{k = 1}^K ( \lambda - p_k) } 
\end{align}
\rev{where now $p_k = -{l}/{2} + e^{\jmath\gamma_k}/\sqrt[K]{w}$ for $\gamma_k = (2k + 1)\pi/K$.} Again, from the stability of \ARMA{K} $|p_k| > \mrad$, or equivalently $|p_k|^2 > \mrad^2$, we now obtain stable filters as long as
\begin{align}
\left(-\frac{l}{2} + \frac{\text{cos}(\gamma_k)}{\sqrt[K]{w}}\right)^2 + \frac{\text{sin}^2(\gamma_k)}{\sqrt[K]{w}^2} > \mrad^2,
\end{align}
or equivalently
\begin{align}
\left(\frac{l^2}{4} - \mrad^2\right)\sqrt[K]{w}^2 - l~\text{cos}(\gamma_k)\sqrt[K]{w} + 1 > 0,
\end{align}
are satisfied. For the shifted normalized Laplacian, $\varrho = 1$ and $l = 2$, the stability condition simplifies to 
\begin{align}
2\text{cos}(\gamma_k)\sqrt[K]{w} < 1,
\label{eq.norm_K12}
\end{align}
which is always met for the standard choices of $K = 1$ (quadratic regularization) and $K=2$ (total variation)\footnote{\rev{Even though $w$ is a free parameter, for K = 1, 2 the value $\text{cos}(\gamma_k)$ in \eqref{eq.norm_K12} will be either 0 or -1, due to the expression of $\gamma_k$.}}.}  For these values of $K$, and for different values of the regularized weight $w$, we show in Fig.~\ref{fig:comparison} the normalized error between the output of the ARMA$_K$ recursion and the solution of the optimization problem \eqref{eq:denoising}, as a function of time.

For both \eqref{eq:den_ARMA} and \eqref{eq:den_ARMA_ref}, the denominator coefficients $\psi^{(k)}$ of the corresponding parallel ARMA$_K$ filter can be found as $\psi^{(k)} = 1/p_k$. Meanwhile, the numerator coefficients $\varphi^{(k)}$ are found in two steps: (i) express \eqref{eq:den_ARMA}, \eqref{eq:den_ARMA_ref} in the partial form as in \eqref{eq:ARMA_Kpartial} to find the residuals $r_k$ and (ii) take $\varphi^{(k)} = -r_k\psi^{(k)}$.

\paragraph{Wiener-based denoising.} When the statistical properties of the graph signal and noise are available, it is common to opt for a Wiener filter, \ie the linear filter that minimizes the mean-squared error (MSE)
\begin{align}
	\tilde{\H} &= \argmin_{\H \in \mathbb{R}^{N \times N}} \mathbb{E}\left[{{\norm{ \H \t - \x}^2_2 }}\right] \textnormal{and}~
	\tilde{\x} = \tilde{\H}\t,
	\label{eq:Wiener_obj}
\end{align}
where as before $\t = \x + \n$ is the graph signal which has been corrupted with additive noise. It is well known that, when $\x$ and $\n$ are zero-mean with covariance $\bSigma_\x$ and $\bSigma_\n$, respectively,
the solution of \eqref{eq:Wiener_obj} is 
\begin{align}	
	   \tilde{\x} = \bSigma_\x (\bSigma_\x + \bSigma_\n)^{-1} \t.
	   \label{eq:Wiener_solution_general}
\end{align}  
given that matrix $\bSigma_\x + \bSigma_\n$ is non-singular.
The above linear system can be solved by a graph filter when matrices $\bSigma_\x $ and $\bSigma_\n$ share the eigenvectors of the Laplacian matrix $\L$. Denote by $\sigma_\x(\lambda_n) = \bphi_n^\top \bSigma_\x \bphi_n$ the eigenvalue of matrix the $\bSigma_\x$ which corresponds to the $n$-th eigenvector of $\L$, and correspondingly $\sigma_\n(\lambda_n) = \bphi_n^\top \bSigma_\n \bphi_n$. We then have that 
\begin{align}
	\tilde{\x} = \sum_{n=1}^N \frac{ \sigma_\x(\lambda_n) }{ \sigma_\x(\lambda_n) + \sigma_{\n}(\lambda_n)} \langle \t, \bphi_n \rangle \bphi_n.
	\label{eq:wienerfilter}
\end{align}
It follows that, when $\sigma_\x(\lambda)$ and $\sigma_\n(\lambda)$ are rational functions (of $\lambda$) of order $K$, the Wiener filter corresponds to an \ARMA{K} graph filter. Notice that the corresponding filters are still universal, as the \ARMA{K} coefficients depend on the rational functions $\sigma_\x(\lambda)$ and $\sigma_\n(\lambda)$, but not on the specific eigenvalues of the graph Laplacian. \rev{In a different context, similar results have been also observed in semi-supervised learning \cite{Girault2014}.}

Let us illustrate the above with an example. Suppose that $\x$ is normally distributed with covariance equal to the pseudoinverse of the Laplacian $\L^\dagger$. This is a popular and well understood model for smooth signals on graphs with strong connections to Gaussian Markov random fields~\cite{Zhang2015}. In addition, let the noise be white with variance $w$. Substituting this into \eqref{eq:wienerfilter}, we find 
\begin{align}
	\tilde{\x} &= \sum_{n=1}^N \frac{ 1 }{ 1 + w\lambda_n} \langle \t, \bphi_n \rangle \bphi_n, 
\end{align}
which is identical to the Tikhonov-based denoising for $K=1$ and corresponds to an \ARMA{1} \rev{with $\varphi = {2}/{(2 + wl)}$ and $\psi = -{2w}/{(2 + w{l})}$, which as previously shown has always stable implementation.}
Note that even though the stability is ensured for the considered case, it does not necessarily hold for every covariance matrix. Indeed, the stability of the filter must be examined in a problem-specific manner.

\paragraph{Graph signal interpolation.} Suppose that only $r$ out of the $N$ values of a signal $\x$ are known, and let $\t$ be the $N \times 1$ vector which contains the known values and zeros otherwise. Under the assumption of $\x$ being smooth \wrt $\L = \L_\text{d}$ or $\L = \L_\text{n}$, we can estimate the unknowns by the regularized problem 
\begin{align}
	\tilde{\x} = \argmin_{\x \in \mathbb{R}^N} \norm{\S \left(\x - \t\right)}_2^2 + w \, \x^\top \L^K \x,
	\label{eq:interpolation}
\end{align}
where $\S$ is the diagonal matrix with $S_{ii} = 1$ if $x_i$ is known and $S_{ii} = 0$ otherwise. Such formulations have been used widely, both in the context of graph signal processing~\cite{Narang2013,Mao2014} and earlier by the semi-supervised learning community~\cite{Belkin2004,Zhang2006}. 
Similar to~\eqref{eq:denoising}, this optimization problem is convex and its global minimum is found as 
\begin{align}
	\tilde{\x} = \left( \S + w \L^K \right)^{-1} \t.
\end{align}
Most commonly, $K = 1$, and $\tilde{\x}$ can be re-written as 
\begin{align}
	\tilde{\x} \!=\! \Big( \I - \hat{\L} \Big)^{-1}\! \t \!= \!\sum\limits_{n = 1}^N \frac{1}{1 + \hat{\lambda}_n} \langle\t,\hat{\bphi}_n\rangle \hat{\bphi}_n.
\end{align}
which is an \ARMA{1} filter designed for the Laplacian matrix $\hat{\L} = \S - \I + w \L$ with $(\hat{\lambda}_n, \hat{\bphi}_n)$
the $n$-th eigenpair of $\hat{\L}$. 
For larger values of $K$, the interpolation cannot be computed distributedly using \ARMA{} filters. That is because the corresponding 
 basis matrix $\hat{\L} = \S + w \L^K$ cannot be appropriately factorized into a series of local matrices.


\section{Time-Variations}
\label{sec:time_variations}
At this point we have characterized the filtering and convergence properties of \ARMA{} graph filters for static inputs. But do these properties hold when the graph and signal are a function of time? In the following, we characterize ARMA graph filters with respect to time-variations in the graph signal (\cf Section~\ref{subsec:joint}), as well as in the graph topology (\cf Section~\ref{subsec:TV}). 
\subsection{Joint Graph and Temporal Filters}
\label{subsec:joint}

To understand the impact of graph signal dynamics we broaden the analysis to a two-dimensional domain: the first dimension, as before, captures the graph (based on the graph Fourier transform), whereas the second dimension captures time (based on the Z-transform \cite{Hayes2009}). 
This technique allows us to provide a complete characterization of the \ARMA{} filter subject to time-variations. First, we show that the \ARMA{} filter output remains close to the correct time-varying solution (under sufficient conditions on the input), which implies that  our algorithms exhibit a certain robustness to dynamics. Further, we realize that \ARMA{} graph filters operate along both the graph and temporal dimensions. We find that a graph naturally dampens temporal frequencies in a manner that depends on its spectrum. Exploiting this finding, we extend the \ARMA{} designs presented in Section~\ref{sec:arma} so as to also allow a measure of control over the temporal frequency response of the filters.

As previously, we start our exposition with the \ARMA{1} recursion \eqref{eq:ARMA1}, but now the input graph signal $\x_t$ is time dependent (thus, indexed with the subscript $t$)
\begin{subequations}
\label{eq:ARMA1TV}
\begin{empheq}{align}
  & \y_{t+1} = \psi\L\y_t + \varphi\x_t \\
  & \z_{t+1} = \y_{t+1} +  c \x_t.	
\end{empheq}
\end{subequations}
The dimension of the above recursion can be reduced by restricting the input graph signal to lie in the subspace of an eigenvector $\bphi$ with associated eigenvalue $\mu$, \ie $\x_t = x_t \bphi$, where now $x_t$ is a scalar \revv{and similarly, we take $\y_0 = y_0\bphi$.\footnote{This is a standard way to derive the frequency response of the system.}} \rev{By orthogonality of the basis, the filter only alters the magnitude $x_t$ relative to the eigenvector $\bphi$ and not the direction of $\x_t$.} Therefore, \eqref{eq:ARMA1TV} is equivalent to
\begin{subequations}
\begin{empheq}{align}
  &  y_{t+1} = \psi \lambda y_t + \varphi x_t \\
  & z_{t+1} = y_{t+1} +  c x_t,	
\end{empheq}
\end{subequations}	
where $x_t, y_t, z_t \in \mathbb{R}$ are simply the magnitudes of the vectors $\x_t, \y_t,  \z_t \in \mathbb{C}^n$ lying in the eigenspace of $\bphi$, and we can write $\y_t = y_t \bphi$ and $\z_t = z_t \bphi$. Taking the Z-transform on both sides, we obtain the joint graph and temporal frequency transfer function 
\begin{align}
	\label{eq:2DARMA1}
	H(z,\lambda) = \frac{\varphi z^{-1}}{1 - \psi \lambda z^{-1}} + c z^{-1}.
\end{align}
\rev{It can be shown that the temporal-impulse response for each graph frequency $\lambda$ is}
%
\begin{align}
\label{eq.joind_imp_resp}
	\revv{\h_{t+1} (\lambda) = \left( \varphi (\psi \lambda)^t + c\delta[t] \right) \bphi,}
\end{align}
\revv{with $\delta[t]$ the impulse function.} From \eqref{eq.joind_imp_resp} we deduce that the region of convergence (ROC) of the filter is $ \{ \abs{z} > \abs{\psi \lambda}$, for all $ \lambda \}$ and that the filter is causal.

The joint transfer function characterizes completely the behavior of \ARMA{1} graph filters for an arbitrary yet time-invariant graph: when $z \to 1$, we return to the constant $\x$ result and stability condition of Theorem \ref{theorem:ARMA1}, while for all other $z$ we obtain the standard frequency response as well as the graph frequency one. As one can see, recursion \eqref{eq:ARMA1TV} is an \ARMA{1} filter in the graph domain as well as in the time domain. 
Observe also that the poles of $H(z,\lambda)$ obey the fixed relationship $z = \lambda \psi$. This yields an interesting insight: the temporal frequency response of the filter differs along each graph frequency $\lambda$, meaning that temporal dynamics affecting signals lying in low graph frequency eigenspaces are dampened to a smaller extent. 

As Theorems~\ref{theorem:ARMAKTV_parallel} and~\ref{theorem:ARMAKTV_periodic} below show, the results are readily generalized to higher order filters.

\begin{theorem}
\label{theorem:ARMAKTV_parallel}
The joint graph and temporal frequency transfer function of a parallel \ARMA{K} is
\begin{align}
\label{eq:ARMAK_parallel_Trans_Function}
	H(z, \lambda) = \sum_{k = 1}^{K} \frac{\varphi^{(k)} z^{-1}}{1 - \psi^{(k)} \lambda z^{-1}} + c z^{-1},
\end{align}
subject to the stability conditions of Theorem~\ref{cor:ARMAK_parallel}.
\end{theorem}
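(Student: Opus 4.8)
The plan is to reduce the vector recursion \eqref{eq:ARMAK} to a bank of scalar recursions by exploiting the orthogonality of the Laplacian eigenbasis, exactly as was done for the \ARMA{1} filter in the derivation of \eqref{eq:2DARMA1}, and then to invoke linearity of the Z-transform to superpose the $K$ branch responses. First I would restrict the (now time-dependent) input to the eigenspace of a single eigenvector $\bphi$ with associated eigenvalue $\lambda$, writing $\x_t = x_t\bphi$, and likewise take $\y_0^{(k)} = y_0^{(k)}\bphi$ for every branch. Since $\L\bphi = \lambda\bphi$, each branch recursion \eqref{eq:ARMAK_1} keeps its iterate aligned with $\bphi$, so that $\y_t^{(k)} = y_t^{(k)}\bphi$ with the scalar update $y_{t+1}^{(k)} = \psi^{(k)}\lambda\, y_t^{(k)} + \varphi^{(k)} x_t$. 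This is precisely the scalar first-order system already analyzed, whose Z-domain transfer function is $\varphi^{(k)} z^{-1}/(1 - \psi^{(k)}\lambda z^{-1})$.

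Next I would take the Z-transform of the output equation \eqref{eq:ARMAK_2}. Because $\z_{t+1} = \sum_{k=1}^{K} \y_{t+1}^{(k)} + c\x_t$ is a finite linear combination, linearity of the Z-transform gives $Z(z) = \sum_{k=1}^{K} Y^{(k)}(z) + c z^{-1} X(z)$; substituting the per-branch transfer functions then yields $H(z,\lambda) = \sum_{k=1}^{K} \varphi^{(k)} z^{-1}/(1 - \psi^{(k)}\lambda z^{-1}) + c z^{-1}$, which is exactly \eqref{eq:ARMAK_parallel_Trans_Function}. As in the proof of Theorem~\ref{theorem:ARMA1}, the contribution of the initial conditions $y_0^{(k)}$ is a transient that decays geometrically whenever the filter is stable and therefore does not enter the transfer function; and since $\lambda$ was an arbitrary point of $[\lmin,\lmax]$ — the argument never uses that the spectrum of any particular graph is discrete — the identity extends to all $\lambda$ in the spectral interval, giving the claimed universal two-dimensional response.

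Finally I would settle the region of convergence and the stated stability condition. Each summand is a causal first-order filter with ROC $\{|z| > |\psi^{(k)}\lambda|\}$, so the ROC of $H(z,\lambda)$ is the intersection $\{|z| > \max_k |\psi^{(k)}\lambda|\}$ taken over all admissible $\lambda$; for this ROC to contain $z = 1$ — so that letting $z \to 1$ recovers the static graph frequency response and stability condition of Theorem~\ref{cor:ARMAK_parallel} — one needs $|\psi^{(k)}\mrad| < 1$ for every $k$, equivalently $|p_k| = |1/\psi^{(k)}| > \mrad$, which is precisely the stability condition of Theorem~\ref{cor:ARMAK_parallel}. I do not anticipate a genuine obstacle: the whole statement is a superposition of the already-established \ARMA{1} case, and the only points demanding a little care are the bookkeeping of the initial-condition transients and the passage from a discrete graph spectrum to the continuous variable $\lambda$, both handled exactly as in Theorem~\ref{theorem:ARMA1}.
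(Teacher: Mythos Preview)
Your proposal is correct and complete. The route you take is a genuine alternative to the paper's argument, so a brief comparison is worthwhile.

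The paper's proof stacks the $K$ branch states into a single $NK$-dimensional vector and writes the parallel recursion in Kronecker form, $\y_{t+1} = (\bPsi \otimes \L)\y_t + \bvarphi \otimes \x_t$; it then expands this in time, drops the transient $(\bPsi^t \otimes \L^t)\y_0$, takes the Z-transform of the resulting convolution sum to obtain a matrix transfer operator $\H(z)$, and only at the very end applies the GFT to reduce to the scalar $H(z,\lambda)$. You reverse the order of operations: you project onto a single eigenspace first, which immediately decouples the system into $K$ independent scalar first-order recursions identical to the \ARMA{1} case already analyzed in \eqref{eq:2DARMA1}, and then invoke linearity of the Z-transform to sum the branch responses. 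Your approach is more elementary---it needs no Kronecker algebra and reuses the \ARMA{1} computation verbatim---while the paper's approach makes the structure of the full state-space system explicit and could generalize more readily to settings where the branches are coupled. Both handle the initial-condition transient and the passage to continuous $\lambda$ in the same way, and both arrive at the same stability condition $|p_k| > \mrad$.
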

\begin{theorem}
\label{theorem:ARMAKTV_periodic}
The joint graph and temporal frequency transfer function of a periodic \ARMA{K} is
\begin{align}
	H(z, \lambda) = \frac{\sum\limits_{k = 0}^{K-1} \left(\prod\limits_{\tau = k+1}^{K-1} \theta_\tau + \psi_\tau\lambda \right) \varphi_k z^{k-K}}{1 - \left(\prod\limits_{\tau = 0}^{K-1} \theta_\tau + \psi_\tau\lambda\right) z^{-K}} + c z^{-1},
	\label{eq:ARMAKTV_periodic}
\end{align}
subject to the stability conditions of Theorem~\ref{theorem:ARMAK_periodic}.
\end{theorem}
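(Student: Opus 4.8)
The plan is to mirror the two-step derivation already used for the \ARMA{1} transfer function \eqref{eq:2DARMA1}: first collapse the vector recursion onto a single graph-frequency eigenspace, then move to the Z-domain. Concretely, I would feed the periodic recursion \eqref{eq:ARMAK_periodic} the time-varying input $\x_t$ and restrict it to the eigenspace of an eigenvector $\bphi$ of $\L$ with eigenvalue $\lambda$, writing $\x_t = x_t\bphi$, $\y_t = y_t\bphi$, $\z_t = z_t\bphi$ as in Section~\ref{subsec:joint}. By orthogonality of the basis this reduces \eqref{eq:ARMAK_periodic} to the scalar, linear periodically time-varying recursion $y_{t+1} = (\theta_t + \psi_t\lambda)\,y_t + \varphi_t x_t$, together with $z_{t+1} = y_{t+1} + c\,x_t$, where the coefficients $\theta_t,\psi_t,\varphi_t$ have period $K$.

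Next I would \emph{unroll} the scalar recursion over one full period. Setting $a_\tau = \theta_\tau + \psi_\tau\lambda$ and iterating from $t = mK$ to $t = (m+1)K$, the state at consecutive period boundaries satisfies
\begin{equation}
y_{(m+1)K} = \Big(\prod_{\tau=0}^{K-1} a_\tau\Big)\, y_{mK} + \sum_{k=0}^{K-1}\Big(\prod_{\tau=k+1}^{K-1} a_\tau\Big)\varphi_k\, x_{mK+k},
\end{equation}
because the sample $x_{mK+k}$ enters $y_{mK+k+1}$ with weight $\varphi_k$ and is then successively multiplied by $a_{k+1},\dots,a_{K-1}$ before reaching the next boundary. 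I would then take the Z-transform in the (fast) time index $t$, noting that a one-period hop contributes the factor $z^{-K}$ and that $x_{mK+k}$ precedes the valid output $y_{(m+1)K}$ by exactly $K-k$ steps, hence contributes $z^{-(K-k)} = z^{k-K}$. Solving for $Y(z)/X(z)$ yields the rational function in $z^{-K}$ appearing in \eqref{eq:ARMAKTV_periodic}, and appending the term $c z^{-1}$ coming from the output equation $z_{t+1} = y_{t+1} + c x_t$ (the $c$-term being the input one step before the valid output) gives the claimed $H(z,\lambda)$. As a sanity check I would verify that setting $z\to 1$ recovers the static response \eqref{eq:periodic_response} and that requiring the poles $z^K = \prod_{\tau=0}^{K-1}(\theta_\tau+\psi_\tau\lambda)$ to lie strictly inside the unit circle, uniformly over the spectrum of $\L$, collapses to $\big|\prod_{k=0}^{K-1}(\theta_k+\psi_k\mrad)\big| < 1$, i.e.\ the stability constraint \eqref{eq.stabl_period} inherited from Theorem~\ref{theorem:ARMAK_periodic}; this simultaneously fixes the region of convergence and causality exactly as in the \ARMA{1} case.

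The main obstacle, I expect, is making the Z-domain step rigorous rather than merely formal: a periodically time-varying recursion is strictly time-invariant only after ``blocking'' the signal into length-$K$ super-samples, so one has to argue that the scalar $H(z,\lambda)$ obtained above is precisely the transfer function between the input and the \emph{valid} (every-$K$-th) outputs — equivalently, that the $K$ interleaved \ARMA{1} sub-filters do not alias into one another in a way that breaks the single-variable description. This is exactly the bookkeeping that the appendix proof of Theorem~\ref{theorem:ARMAK_periodic} already settles for the constant-input steady state, so I would reuse its explicit one-period state-transition product and simply carry the input delays $z^{-(K-k)}$ through it. Everything else — summing the geometric series in $z^{-K}$ and matching against \eqref{eq:periodic_response} — is routine algebra, and the parallel statement of Theorem~\ref{theorem:ARMAKTV_parallel} follows even more directly by summing the \ARMA{1} transfer functions \eqref{eq:2DARMA1}.
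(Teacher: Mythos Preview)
Your proposal is correct and follows essentially the same approach as the paper: reduce to a scalar recursion on a single eigenspace, exploit the $K$-periodicity to obtain a time-invariant relation at the period boundaries, take the Z-transform, and sum the resulting geometric series in $z^{-K}$. The only cosmetic difference is that the paper first expands the full history $\y_{iK}$ over all past periods and then regroups the sum by periodicity before taking the Z-transform, whereas you unroll a single period to get the blocked LTI recursion directly; the algebra and the resulting transfer function are identical.
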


\vskip2mm
(The proofs are deferred to the appendix.)
\vskip2mm

\begin{figure}[t!]
\centering
\begin{tabular}{c c}
\hskip-4mm
      {\includegraphics[width = .51\columnwidth, keepaspectratio]{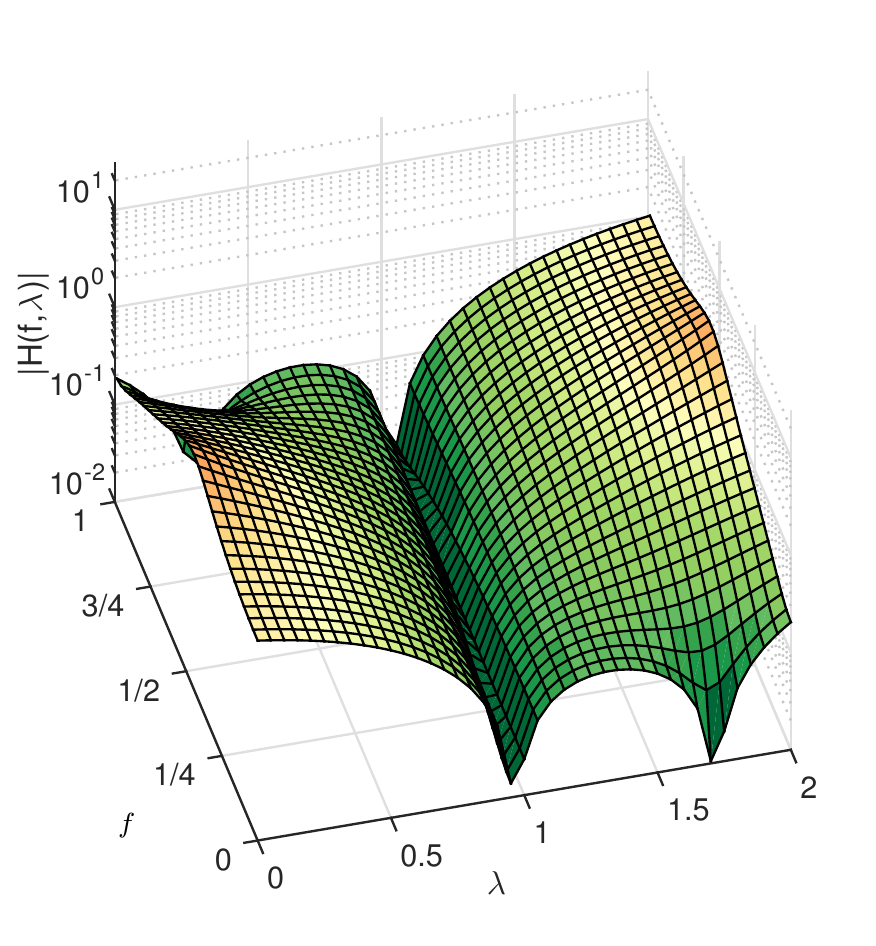}} & \hskip-3mm{\includegraphics[width = .51\columnwidth, keepaspectratio]{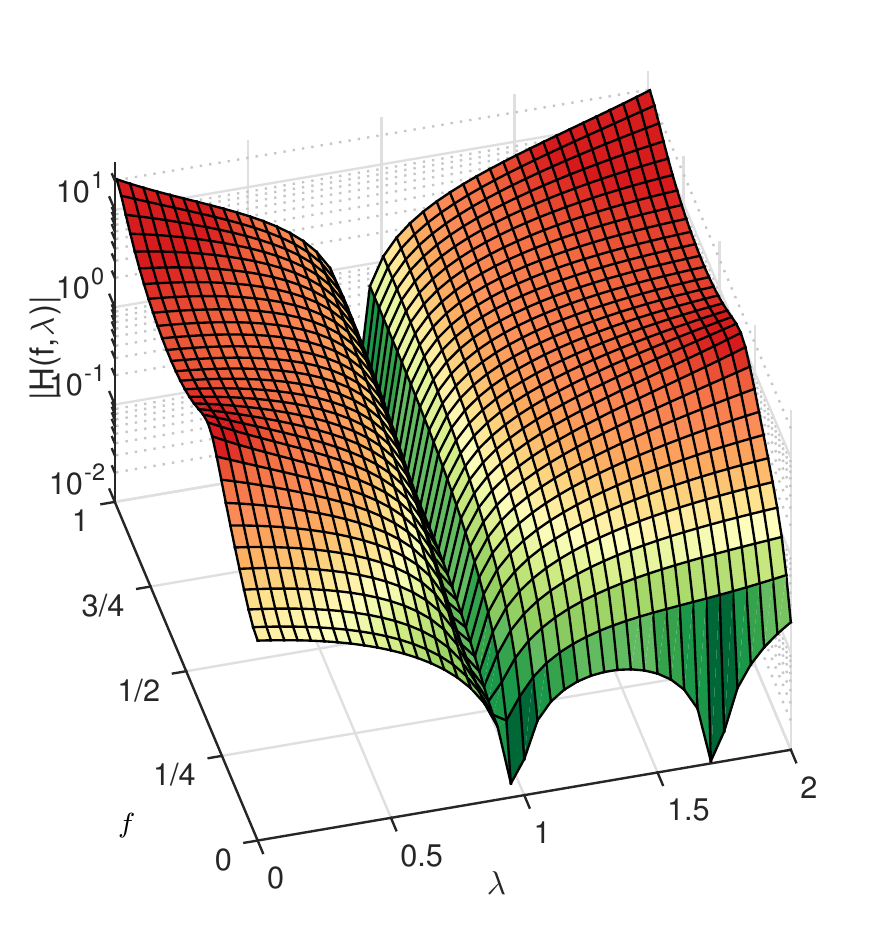}}
\end{tabular}
\caption{The joint graph and temporal frequency response of a parallel and a periodic graph filter, both designed to approximate an ideal low pass (step) \revv{response with cut-off frequency $\lambda_c = 0.5$} and $K = 3$ \wrt the normalized graph Laplacian. \rev{The temporal frequencies $f$ are normalized ($\times \pi$ rad/sample).} \label{fig:joint}}
\vspace{-5mm}
\end{figure}

As in the first order case, Theorems~\ref{theorem:ARMAKTV_parallel} and~\ref{theorem:ARMAKTV_periodic} describe completely the behavior of the parallel and periodic implementations. We can see that both filters are \ARMA{K} filters in the graph and temporal domain. 
In particular, the parallel and periodic filters have up to $K$ distinct poles abiding respectively to
\begin{align}
	z = \psi^{(k)} \lambda \quad \text{and} \quad z = \sqrt[K]{\prod_{\tau = 0}^{K-1} \theta_\tau + \psi_\tau \lambda }.
\end{align}

To provide further insight, Fig.~\ref{fig:joint} plots the joint graph and temporal frequency response of a parallel and a periodic graph filter of third order, \rev{both designed (only in the graph domain) to approximate an ideal low pass response \revv{with cut-off frequency $\lambda_\text{c} = 0.5$.} In the figure, the horizontal axis measures the graph frequency with smaller $\lambda$ corresponding to lower variation terms. The temporal axis on the other hand measures the normalized temporal frequency $f$ such that, for $f = 0$, one obtains the standard graph frequency response.}

\rev{We make two observations: \emph{First, both graph filters ensure almost the same frequency response as for the static case ($f = 0$) for low temporal variations $f \le 1/8$.} This suggests that these filters are more appropriate for slow temporal variations.}
\rev{
Whereas for graph signals lying in eigenspaces \revv{with $\lambda$ close to $\lambda = 1$ all temporal frequencies are damped.} This is a phenomenon that transcends the filter implementation (parallel or periodic) and the particular filter coefficients. It is attributed to the shifting of the Laplacian in the design phase and to the multiplicative relation of the response poles. \emph{Second, the parallel concatenation of \ARMA{1} filters results in a more stable implementation that is more fit to tolerate temporal dynamics than the periodic implementation.} As shown in Figure~\ref{fig:joint}, for $\lambda = 0$ and $\lambda=2$, temporal fluctuations with frequencies exceeding $1/8$ cause the periodic filter output to blow up by an order of magnitude, effectively rendering the periodic implementation unusable. The poor stability of the periodic implementation is also seen from~\eqref{eq:ARMAKTV_periodic}, where the $\theta_\tau$ terms tend to push the poles closer to the unit circle, and it is the price to pay for its small memory requirements. 
}
Due to its superior stability properties and convenient form (less coefficients and simpler design), we suggest the parallel implementation for dealing with time-varying graph signals.

\subsection{Time-Varying Graphs and Signals}
\label{subsec:TV}

Time variations on the graph topology bring new challenges to the graph filtering problem. \emph{First, they render approaches that rely on knowledge of the graph spectrum ineffective.} Approaches which ensure stability by designing the poles to lie outside the set of the Laplacian eigenvalues of a given graph, may lead to unstable filters in a different graph where some eigenvalues may over-shoot one of the poles. Due to their different design philosophy, the presented ARMA graph filters handle naturally the aforementioned issues. We can, for instance, think that the different graph realizations among time enjoy an upper bound on their maximum eigenvalue $\lmax$. In case this is not possible, or difficult to determine, we can always work with the normalized Laplacian and thus take $\lmax = 2$. In this way, by designing the filter coefficients in order to ensure stability \wrt $\lmax$, \rev{we automatically impose stability for all different graph realizations.} Furthermore, by designing once the filter coefficients for a continuous range of frequencies, the ARMA recursions also preserve the desired frequency response for different graph realizations.

\emph{The second major challenge is characterizing the graph filter behavior.} Time-varying affine systems are notoriously difficult to analyze when they possess no special structure~\cite{Touri2011}. To this end, we devise a new methodology for time-varying graph filter analysis. We show that a decomposition basis always exists, over which \ARMA{1} graph filters (and as a consequence parallel \ARMA{K} filters) have the same frequency response as in the static case. Furthermore, this decomposition basis depends only on the sequence of graph realizations.

In case of a time-varying graph topology, yet with a fixed number of nodes $N$, as well as a time-varying graph signal, the ARMA$_1$ recursion~\eqref{eq:ARMA1} can be written as
\begin{subequations}
\label{eq:ARMA1TV_G}
\begin{empheq}{align}
\label{{eq:ARMA1TV_G}_a}
  & \y_{t+1} = \psi\L_{t}\y_t + \varphi\x_t \\
  & \z_{t+1} = \y_{t+1} +  c \x_t,	
\end{empheq}
\end{subequations}
where the time-varying graph is shown by indexing $\L_t$ with the subscript $t$.
Expanding the recursion we find that, for any sequence of graph realizations $\{G_0, G_1, \ldots , G_{t}\}$ with corresponding Laplacians $\{\L_0, \L_1, \ldots, \L_{t}\}$, the output signal is given by
\begin{align}
\label{eq:ARMA_TVTV}
	\z_{t+1} &\!=\! \psi^{t\!+\!1} \!\bPhi_\L(t, 0) \y_{0} \!+\! \varphi\! \sum_{\tau=0}^{t} \psi^\tau \bPhi_\L(t, t-\tau+1) \x_{t-\tau} \!+\! c \x_{t}\!,
\end{align}
\rev{where $\bPhi_\L(t, t') = \L_t\L_{t-1}\ldots\L_{t'}$ for $t \ge t'$, and $\bPhi_\L(t, t') = \I$ otherwise.}

Since the output $\z_t$ depends on the entire sequence of graph realizations, the spectrum of any individual Laplacian is insufficient to derive the graph frequency of the filter. To extend the spectral analysis to the time-varying setting, we define a joint Laplacian matrix $\LTV$ that encompasses all the individual shifted graph Laplacians. \rev{The intuition behind our approach is to think of a time-varying graph as one large graph $\mathcal{G}_{\emph{tv}}$ that contains all nodes of the graphs $\mathcal{G}_0, \mathcal{G}_1, \ldots, \mathcal{G}_t$, as well as directional edges connecting the nodes at different timesteps. We then interpret the spectrum of the associated Laplacian matrix $\LTV$ as the basis for our time-varying graph Fourier transform. This idea is a generalization of the joint graph construction~\cite{Loukas2016}, used to define a Fourier transform for graph signals which change with time (though in the previous work the graph was considered time-invariant). Similar to~\cite{Loukas2016}, we will construct $\mathcal{G}_{\emph{tv}}$ by replicating each node $u_i\in V$ once for each timestep $t$. Denote the $t$-th replication of the $i$-th node as $u_{t,i}$. For each $t$ and $i$, $\mathcal{G}_{\emph{tv}}$ will then contain directed edges between $u_{t-1,j}$ and $u_{t,i}$ with $u_j$ being a neighbor of $u_i$ in $\mathcal{G}_{t-1}$. Therefore, in contrast to previous work, here the edges between nodes $u_i$ and $u_j$ are a function of time. By its construction, $\mathcal{G}_{\emph{tv}}$ captures not only the topology of the different graphs, but also the temporal relation between them: since the exchange of information between two neighbors incurs a delay of one unit of time, at each timestep $t$, a node has access to the values of its neighbors at $t-1$. }

\rev{To proceed, define $\P$ to be the $(t+1)\times (t+1)$ cyclic shift matrix with ones below the diagonal
and construct $\LTV$ as the $N(t+1) \times N(t+1)$ permuted block-diagonal matrix
\begin{align}
	\LTV = \text{blkdiag}[\L_0, \L_1, \ldots, \L_t] (\P \otimes \I),
\end{align}
\rev{For consistency with the established theory on GFT, when $t=0$ and the graph is time-invariant, we define $\P = 1$.} Let $\e_{\tau}$ be the $(t+1)$-dimensional canonical unit vector with $(e_{\tau})_i = 1 $ if $i = \tau$ and $(e_{\tau})_i = 0$, otherwise. 
Defining $\s = [\x_0^\top, \x_1^\top, \ldots, \x_{t}^\top]^\top$ as the vector of dimension $N(t+1)$ which encompasses all input graph signals, we can then write 
\begin{align}
\label{eq:oihd}
	\bPhi_\L(t,t-\tau+1) \x_{t-\tau} = (\e_{t+1}^\top\otimes\I) \, \LTVtau \, \s.
\end{align}
\revv{In those cases when the non-symmetric matrix $\LTV$ enjoys an eigendecomposition, we have $\LTV = \U \mathbold{\Lambda} \U^{-1}$ with ($\lambda_k$, $[\U]_k$) the $k$-th eigenpair. Specifically, $\lambda_k$ is the $k$-th diagonal element of $\mathbold{\Lambda}$ and $[\U]_k$ is the $k$-th column of $\U$. The total number of eigenpairs of $\LTV$ is $K = N \times (t+1)$. To ease the notation, we will denote as $[\U]^{-1}_k$ the respective $k$-th column of $\U^{-1}$.}
Substituting~\eqref{eq:oihd} into the second term of \eqref{eq:ARMA_TVTV} and rearranging the sums, we get
\begin{align}
	\hspace{0mm}\varphi \, \sum_{\tau=0}^{t} \psi^\tau \bPhi_\L(t, t-\tau+1) \x_{t-\tau} &= \varphi (\e_{t+1}^\top\otimes\I) \sum_{\tau=0}^{t} (\psi \LTV)^\tau \, \s \nonumber \\
	&\hspace{-46mm}= \varphi (\e_{t+1}^\top\otimes\I) \sum_{\tau=0}^{t} \sum_{k = 1}^{K} (\psi \lambda_k)^\tau \,  \langle\s, [\U]^{-1}_k\rangle  [\U]_k \nonumber \\
	&\hspace{-46mm}=  (\e_{t+1}^\top\otimes\I) \sum_{k = 1}^{K}  \varphi \, \frac{1 - (\psi \lambda_{k})^{t+1}}{1 - \psi \lambda_{k}} \langle\s, [\U]^{-1}_k\rangle [\U]_k.
\end{align}
Similarly, 
\begin{align}\label{eq:MTV_y0}
	\psi^{t+1} \bPhi_\L(t, 0) \y_{0} &= (\e_{t+1}^\top\otimes\I) \, \left(\psi \LTV\right)^{t+1} \, (\e_{t+1} \otimes \y_0) \nonumber \\
	&\hspace{-24mm}= (\e_{t+1}^\top\otimes\I)  \sum_{k = 1}^{K}  (\psi \lambda_{k})^{t+1} \langle\e_{t+1} \otimes \y_0, [\U]^{-1}_k\rangle [\U]_k 
\end{align}
as well as 
\begin{align}
	c \, \x_{t} = (\e_{t+1}^\top\otimes\I) \sum_{k = 1}^{K} c\,\langle \s, [\U]^{-1}_k\rangle [\U]_k.
\end{align}
\revvv{Without loss of generality, when $t$ is sufficiently large we can ignore terms of the form $(\psi \lambda_{k})^{t+1}$ as long as $|\psi \lambda_{k}| < 1$, which also indicates that the impact of the filter initialization ${\y}_0$ on the filter output vanishes with time.}This condition is met when $\|\psi\LTV\| < 1$, which as a direct consequence of Gershgorin's circle theorem, this stability condition is met if, for every $\tau$, the sum of the elements of each row of $\L_\tau$, in absolute value, is smaller than $\abs{1/\psi}$ (which also implies that the eigenvalues of $\L_\tau$ are bounded by $\abs{1/\psi}$). For the (shifted) normalized Laplacian this means that $\abs{\psi} < 2$ ($\abs{\psi} < 1$), matching exactly the stability conditions of the static case. Under this sufficient condition, the filter output approaches 
\begin{align}\label{eq:convTVTV}
	\z_{t+1} \approx (\e_{t+1}^\top\otimes\I) \sum_{k = 1}^{K} \(\frac{\varphi}{1 - \psi \lambda_{k}} + c\) \langle\s, [\U]^{-1}_k\rangle [\U]_k.
\end{align}
}%
Notice that the \ARMA{1} retains the same graph frequency response as in the time-invariant case~\eqref{eq:ARMA1_response}, now expressed in the basis of $\LTV$. 
It is not difficult to show that the \ARMA{1} graph filter converges asymptotically. Let us denote the \rev{distance} between the filter output at two different time instants $t_1 > t_2$ as 
\begin{align}\label{eq:err_defin}
	\epsilon_{t_1,t_2} &= \frac{\norm{\z_{t_1} - \z_{t_2}}}{ x_\textit{max}}.
\end{align}
where $x_\textit{max} = \max_{t = 1, \ldots, t_1} \norm{\x_t}$ constitutes an upper bound on the energy of the input. We can now claim
\begin{theorem}\label{theo: err_conv}
Given the \ARMA{1} recursion \eqref{eq:ARMA_TVTV} \revv{and given that the graph Laplacians are uniformly bounded for every $t$ $\|\L_t\| \le \mrad$}, the \rev{distance} $\epsilon_{t_1,t_2}$  between the filter output at time instants $t_1$ and $t_2$ is upper-bounded as
\begin{align}\label{eq:err_fin_theo}
\begin{split}
	\epsilon_{t_1,t_2} &\leq \norm{\y_0} \frac{ |\psi \varrho|^{t_1} + |\psi \varrho|^{t_2} }{x_\textit{max} } + |\varphi| \frac{|\psi\varrho|^{t_2} - |\psi\varrho|^{t_1}}{1 - |\psi\varrho|} \\&\quad+ |c|~\frac{\| \x_{t_1-1} - \x_{t_2-1}\|}{x_\textit{max}}.
	\end{split}
\end{align}
\end{theorem}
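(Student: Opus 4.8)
The plan is to start from the closed-form expression \eqref{eq:ARMA_TVTV} for $\z_{t+1}$, write it out at the two time instants $t_1$ and $t_2$, subtract, and bound the norm of the difference term by term using the triangle inequality together with the uniform bound $\|\L_t\|\le\mrad$. There are three groups of terms in \eqref{eq:ARMA_TVTV}: the initial-condition term $\psi^{t+1}\bPhi_\L(t,0)\y_0$, the input-history sum $\varphi\sum_{\tau=0}^{t}\psi^\tau\bPhi_\L(t,t-\tau+1)\x_{t-\tau}$, and the instantaneous term $c\x_t$. I will treat each group separately, matching the three summands on the right-hand side of \eqref{eq:err_fin_theo}.

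First I would handle the initial-condition term. Since $\|\bPhi_\L(t,0)\|=\|\L_t\cdots\L_0\|\le\prod_{s=0}^{t}\|\L_s\|\le\mrad^{t+1}$, we get $\|\psi^{t_1}\bPhi_\L(t_1-1,0)\y_0 - \psi^{t_2}\bPhi_\L(t_2-1,0)\y_0\| \le |\psi\varrho|^{t_1}\|\y_0\| + |\psi\varrho|^{t_2}\|\y_0\|$ by the triangle inequality; dividing by $x_\textit{max}$ gives the first summand. Next, for the input-history sum, I note that $\|\bPhi_\L(t,t-\tau+1)\x_{t-\tau}\| \le \mrad^{\tau}\,x_\textit{max}$, so each sum is bounded by $|\varphi|\,x_\textit{max}\sum_{\tau=0}^{t}|\psi\varrho|^\tau$. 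The difference of the two history sums (one running to $\tau=t_1-1$, the other to $\tau=t_2-1$, with $t_1>t_2$) needs a little care: the cleanest route is to observe that the two sums share no identical terms in general because the graph realizations differ, so I would bound the difference crudely by $|\varphi|\,x_\textit{max}\bigl(\sum_{\tau=0}^{t_1-1}|\psi\varrho|^\tau + \sum_{\tau=0}^{t_2-1}|\psi\varrho|^\tau\bigr)$ --- but this does not match the stated bound. To obtain the tighter $|\varphi|\,(|\psi\varrho|^{t_2}-|\psi\varrho|^{t_1})/(1-|\psi\varrho|)$ one must instead recognize that the ``common'' portion of the two trajectories (the most recent $t_2-1$ steps) involves the same Laplacians $\L_{t_1-1},\dots$ only if we align the indexing from the top; more robustly, pass to the $\LTV$ representation \eqref{eq:oihd}--\eqref{eq:convTVTV}, where both outputs are expressed as geometric-type series $\sum_k\varphi\frac{1-(\psi\lambda_k)^{t+1}}{1-\psi\lambda_k}\langle\s,[\U]^{-1}_k\rangle[\U]_k$, and the difference collapses to terms of order $|\psi\varrho|^{t_2}$ and $|\psi\varrho|^{t_1}$; summing the geometric tail $\sum_{\tau=t_2}^{t_1-1}|\psi\varrho|^\tau \le (|\psi\varrho|^{t_2}-|\psi\varrho|^{t_1})/(1-|\psi\varrho|)$ yields the second summand. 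Finally, the instantaneous term contributes exactly $|c|\,\|\x_{t_1-1}-\x_{t_2-1}\|/x_\textit{max}$, the third summand, with no estimation needed.

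The main obstacle will be the bookkeeping in the middle (input-history) term: getting the \emph{difference} of two finite geometric sums of matrix products --- over \emph{different} sequences of Laplacians --- to telescope into the clean expression $(|\psi\varrho|^{t_2}-|\psi\varrho|^{t_1})/(1-|\psi\varrho|)$ rather than a cruder sum of two separate tails. The key realization that makes this work is that $|\psi\varrho|<1$ (guaranteed by the stability assumption, consistent with the conditions of Theorem~\ref{theorem:ARMA1}), so the scalar bound $|\psi\varrho|^\tau$ on the $\tau$-th term is summable, and the overlap structure need only be exploited at the level of these scalar bounds: $\bigl|\sum_{\tau=0}^{t_1-1}|\psi\varrho|^\tau - \sum_{\tau=0}^{t_2-1}|\psi\varrho|^\tau\bigr| = \sum_{\tau=t_2}^{t_1-1}|\psi\varrho|^\tau \le |\psi\varrho|^{t_2}/(1-|\psi\varrho|)$, and the $|\psi\varrho|^{t_1}$ correction comes from a sharper evaluation of the same partial sum. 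Everything else --- submultiplicativity of the spectral norm, the triangle inequality, and the geometric series formula --- is routine. Once all three bounds are assembled and divided through by $x_\textit{max}$, inequality \eqref{eq:err_fin_theo} follows.
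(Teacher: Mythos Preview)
Your overall strategy---expand \eqref{eq:ARMA_TVTV} at both time instants, subtract, and bound the three groups (initial-condition, input-history, instantaneous) separately via the triangle inequality and the submultiplicativity estimate $\|\bPhi_\L(t,t')\|\le\varrho^{t-t'+1}$---is exactly the paper's route, and your handling of the first and third groups matches the paper.

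The genuine gap is in the input-history term. Your closing argument (``the overlap structure need only be exploited at the level of these scalar bounds'') is logically invalid: from $\|A\|\le a$ and $\|B\|\le b$ you may conclude $\|A-B\|\le a+b$, but never $\|A-B\|\le|a-b|$. Computing $\bigl|\sum_{\tau=0}^{t_1-1}|\psi\varrho|^\tau-\sum_{\tau=0}^{t_2-1}|\psi\varrho|^\tau\bigr|$ is the difference of two \emph{upper bounds}, which controls nothing about $\|S_1-S_2\|$. The $\LTV$ detour you sketch is also not what the paper does and would not by itself close this gap.

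The paper's actual step is more direct: after writing out $\z_{t_1+1}-\z_{t_2+1}$ it asserts, under the heading ``rearranging the terms,'' that the two history sums combine to leave only the single tail
\[
\varphi\sum_{\tau=t_2+1}^{t_1}\psi^\tau\,\bPhi_\L(t_1,t_1-\tau+1)\,\x_{t_1-\tau},
\]
and then bounds this tail termwise by $|\varphi|\,x_{\textit{max}}\sum_{\tau=t_2+1}^{t_1}|\psi\varrho|^\tau$, evaluating the finite geometric sum to obtain $|\varphi|\,x_{\textit{max}}\,(|\psi\varrho|^{t_2+1}-|\psi\varrho|^{t_1+1})/(1-|\psi\varrho|)$, which after the index shift yields the middle summand of \eqref{eq:err_fin_theo}. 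You were right to sense that this cancellation is not automatic when the Laplacian sequences and inputs in the two windows differ; the paper offers no further justification for the rearrangement. To align with the paper, drop the scalar-bound-difference reasoning and the $\LTV$ excursion, state the rearranged tail, and bound it directly.
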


\vskip2mm
(The proof is deferred to the appendix.)
\vskip2mm

\revv{For simplicity, set $c=0$ and consider $t_1$ big enough such that the term $|\psi \varrho|^{t_1} \approx 0$. 
Then, directly from \eqref{eq:err_fin_theo} we can find the value of $t_2$ such that the error between the two is smaller than a desired positive constant $\varepsilon$, \ie}
\begin{align}\label{eq.time}
t_2 \geq \log{(\alpha/\varepsilon)} \quad \Rightarrow \quad \epsilon_{t_1,t_2} \leq \varepsilon, 
\end{align}
with $\alpha = \norm{\y_0}/x_\textit{max} + \abs{\varphi} / (1 - \abs{\psi \varrho})$.
%
 
The results of Theorem \ref{theo: err_conv} can be extended to the general ARMA$_K$ graph filter. For the parallel implementation, we can proceed in the same way as for the ARMA$_1$ by considering that the output signal is the sum of $K$ ARMA$_1$ graph filters. Meanwhile, for the periodic implementation we can see that its form \eqref{eq:periodic_ext}, after one cyclic period, is analogous to \eqref{eq:ARMA1TV_G}. 

\revv{The main result of Theorem~\ref{theo: err_conv} stands in the fact that the ARMA output will not diverge as long as the graph Laplacians of each realization $\mathcal{G}_t$ has uniformly bounded spectral norm and from \eqref{eq.time} the distance decreases exponentially. Further, for $t$ big enough and if $\LTV$ enjoys an eigendecompositon the result in \eqref{eq:convTVTV} gives us insights where the ARMA output converges. Numerical results suggest that the obtained output is generally close to the designed frequency response of the ARMA filter.}
 \vspace{-2mm}


\section{Numerical Results}
\label{sec:numerical_results}
To illustrate our results we simulate two different case-studies: one with a fixed graph and a time-varying graph signal, and one where both the graph and graph signal are time-varying. In the latter case, the ARMA performance is also compared to the state-of-the-art FIR filters designed in a universal manner~\cite{Shuman2011}. 
With the first case-study, we aim to show how the proposed filters operate on graph signals that have spectral content in both graph and temporal frequency domains. Meanwhile, with the second the goal is to illustrate the ARMA performance when the underlying graph topology is not static anymore, but varies with time. \revv{For all our simulations, the ARMA filters, if not differently mentioned, are initialized to zero (\ie $\y_0=\mathbf{0}$ and $\y_0^{(k)} = \mathbf{0}$ for all $k$) and the filter design is performed in a universal setting.}

\subsection{Variations on the Graph Signal}
\label{subsec:sigTV}
%

In this subsection, we present simulation results for time-varying signals. 
We consider a 0.5-bandlimited graph signal $\u_t$ oscillating with a fixed temporal frequency $\pi/10$, meaning that 
\begin{align} 
\revv{\left\langle \u_{t}, \bphi_n \right\rangle = 
\left\{
\begin{array}{lr}
   e^{\jmath \pi t /10} & \text{if } \lambda_n < 0.5 \\
   0             & \text{otherwise},
\end{array}
\right.}
\end{align}
where $\lambda_n$ is the $n$-th eigenvalue of the normalized graph Laplacian and $t$ it the time index.
The signal is then corrupted with a second interfering signal $\v_t$, oscillating with a temporal frequency $9\pi/10$ \revvv{with graph spectrum defined in the following in two different ways.}. \rev{In addition, the signal at each node is corrupted with i.i.d. Gaussian noise $\n_t$, with zero mean and variance $\sigma^2 = 0.1$.}
We then attempt to recover $\u_t$ by filtering it with a parallel \ARMA{5} graph filter, effectively canceling the interference $\v_t$ and attenuating the out of band noise. 
\rev{The ARMA filter is designed only in the graph frequency domain based on the GFT of $\u_t$, \ie to approximate an ideal low-pass filter in the graph domain with cut-off frequency $\lambda_c = 0.5$. Regarding the temporal part, we exploit the property of the filter to preserve the same graph frequency response as the static case for low temporal oscillations, while attenuating the contribution of high temporal frequencies.}  
Our simulations were conducted using a random geometric graph $G$ composed of 100 nodes placed randomly in a square area, with any two nodes being connected if they are closer than 15$\%$ of the maximum distance in the area, \revv{with an average degree of 11.8.}%

Depending on whether the interference is correlated with the signal or not, we distinguish between two scenarios:

\vskip3mm
\emph{i) Correlated signal interference.} In this scenario, the interference is self-induced, \revvv{meaning that at a given instant $t$, $\v_t$ and $\u_t$ share the same graph spectrum,} but oscillating at a higher temporal frequency (due for instance to electronics problems).
To provide intuition, in Fig.~\ref{fig.sig_spec}.a, we show the graph spectral content of $\u_0$ and $\u_0 + \v_0 + \n_0$. We can see that once corrupted by noise and interference, the graph signal presents significant spectral content across the graph spectrum, thus loosing its bandlimited nature. Meanwhile, Fig.~\ref{fig.sig_spec}.b depicts the real part of the graph spectral content of the filter output after 100 iterations (\ie well after the initial state is forgotten). Even though the figure cannot capture the effect of dynamics (as it solely focuses on $t = 100$), it does show that all frequencies above $\lambda_c = 0.5$ have been attenuated and the interference contribution in the band is reduced. 

\begin{figure}[!t]
\centering
\includegraphics[width=0.95\columnwidth, trim= 0.4cm 0 0.45cm 0.5, clip=on]{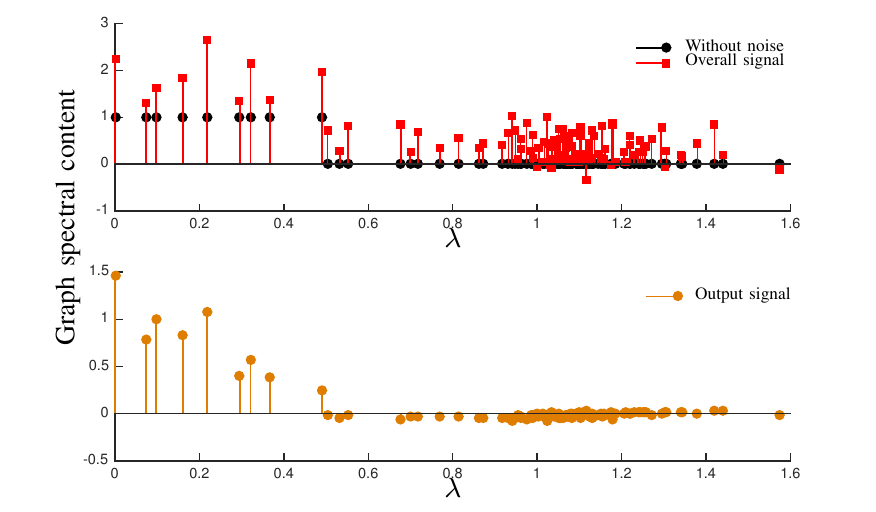}
\caption{Graph spectral content of the input signal as well as of the overall signal affected by interference and noise $a)$ (top), and of the filter output signal $b)$ (bottom). \revv{The output signal graph spectrum is shown for $t = 100$.}}
\vspace{-4mm}
\label{fig.sig_spec}
\end{figure}

\rev{To illustrate the filtering of the temporal frequencies of the signal, in Fig.~\ref{fig.sig_spec_time} we show the average spectrum over all nodes of the input and output signal. To increase visibility, the values in the figure are normalized with respect to the maximum. We can see that, the content relative to the interfering frequency $9\pi/10$ of the output signal is attenuated around 13 dB  with respect to the main temporal frequency content of $\pi/10$.}

\vskip3mm
\emph{ii) Uncorrelated signal interference.} Let us now consider a more involved scenario, in which the interfering graph signal satisfies
\begin{align}
	\revv{\langle \v_t, \bphi_n \rangle = e^{\jmath 9\pi t /10}e^{-\lambda_n},} 
\end{align}
\ie it is a signal having a heat kernel-like graph spectrum oscillating in time with a pulsation $\omega = 9\pi/10$. 
We will examine two types of errors: i) The first compares for each time $t$ the \ARMA{5} output GFT $\hat{\z}_t$ to that of the signal of interest 
\begin{align}
	{e}^{(\textit{total})}_t = \frac{\norm{\hat{\z}_t - \hat{\u}_t}}{\norm{\hat{\u}_t}}. 
\end{align}
Achieving a small error $e^{(\textit{total})}_t$ is a very challenging problem since an algorithm has to simultaneously overcome the addition of noise and the interference, while at the same time operating in a time-varying setting (see Fig.~\ref{fig.interf_err}).
ii) The second error focuses on interference and compares $\z_t$ to the output $\z^*_t$ of the same \ARMA{5} operating on $\u_t + \n_t$ (but not $\u_t + \v_t + \n_t$)
\begin{equation}
\label{eq:gr_err}
	\textnormal{e}^{(\textit{interf})}_t = \frac{\|\hat{\z}_{t} - \hat{\z}^*_{t} \|}{\| \hat{\z}^*_{t} \|},
\end{equation}
where $\hat{\z}^*_{t}$ is the GFT of $\z^*_{t}$.

We can see from Fig.~\ref{fig.interf_err} that after a few iterations this error becomes relatively small, which means that the output spectrum of the ARMA recursion when the signal is affected by interference is similar to when the interference-less signal is used. This gives a first insight, that using the ARMA recursion we can manage multiple signals on a graph by simply making them orthogonal in the temporal frequency domain. By a specific design of the filter coefficients, one can distributively operate on the graph signal of interest and ignore the others. Such a result cannot be achieved with FIR filters for two reasons: (i) they suffer from handling time-varying input signals, and (ii) the FIR filters do not operate on the temporal frequency content of the graph signals, thus such a distinction between overlapping signals is difficult to achieve.

The above results illustrate the conclusions of Section \ref{sec:time_variations}, and also quantify how much we can attenuate the signal at a specific graph/temporal frequency.
\vspace{-2mm}
\begin{figure}[!t]
\centering
\includegraphics[width=0.95\columnwidth, trim= 0.4cm 0 0.45cm 0.5, clip=on]{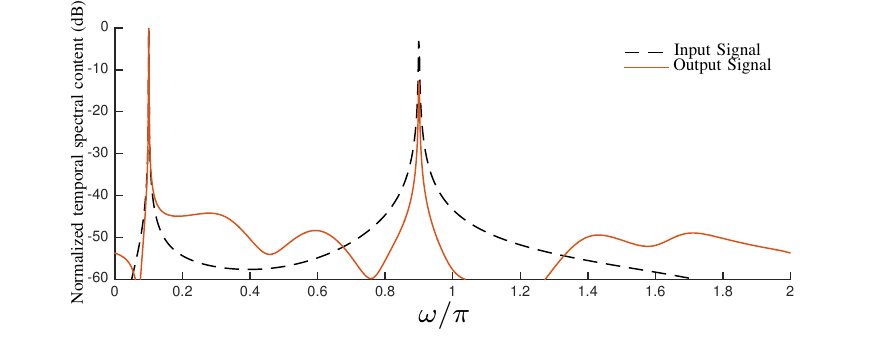}
\caption{Average time spectral content over all nodes of the input and output signal. The values are normalized with respect to the maximum.}
\label{fig.sig_spec_time}
\end{figure}

\begin{figure}[!t]
\centering
\includegraphics[width=0.95\columnwidth, trim= 0.4cm 0 0.45cm 0, clip=on]{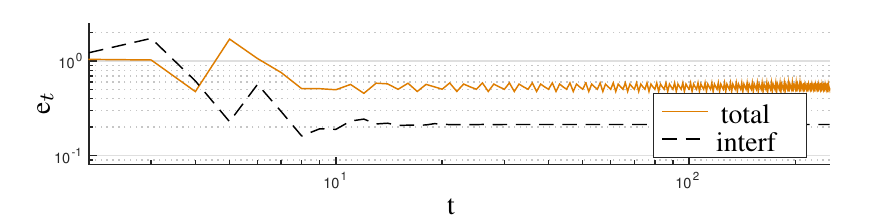}
\caption{Error of the ARMA recursion when the time-vaying input signal is affected by uncorrelated interference.}
\vspace{-4mm}
\label{fig.interf_err}
\end{figure}

\subsection{Variations on the Graph Topology}
\label{subsec:ARMA_FIR}
%
We examine the influence of graph variations for two filtering objectives. The first,  which corresponds to denoising, can be computed \emph{exactly} using ARMA. In the second objective, the graph filter is designed to \emph{approximate} an ideal low-pass graph filter, \ie a filter that eliminates all graph frequency components higher than some specific $\lambda_c$. 
In addition, we employ two different types of graph dynamics: \emph{random edge failures}, where the edges of a graph disappear at each iteration with a fixed probability, as well as the standard model of \emph{random waypoint} mobility~\cite{Bonnmotion2010}. 
The above setup allows us to test and compare universal ARMA and FIR graph filters \revv{(designed using the least-squares method)} over a range of scenarios, each having different characteristics.

\paragraph{Exact design (denoising).} We simulate the denoising problem (as defined by \eqref{eq:denoising}, with $w = 0.5$ and $K=1$) over the same graph topology of Section \ref{subsec:sigTV}, where the probability that an edge goes down at each time instant is $p = 0.05$.
The input signal $\x = \u + \n$ is given by a linear combination of a smooth signal $\u$ and noise $\n$. \rev{To ensure that the graph signal is smooth, \revv{we set its spectrum, \wrt the initial graph,} as $\langle \u, \bphi_n \rangle = e^{-5\lambda_n}$.} The noise $\n$ on the other hand is i.i.d. Gaussian distributed with zero mean and unit variance. 

\begin{figure}[!t]
\centering
\scriptsize
\includegraphics[width=1\columnwidth, trim= 0.3cm 0 0.75cm 0, clip=on]{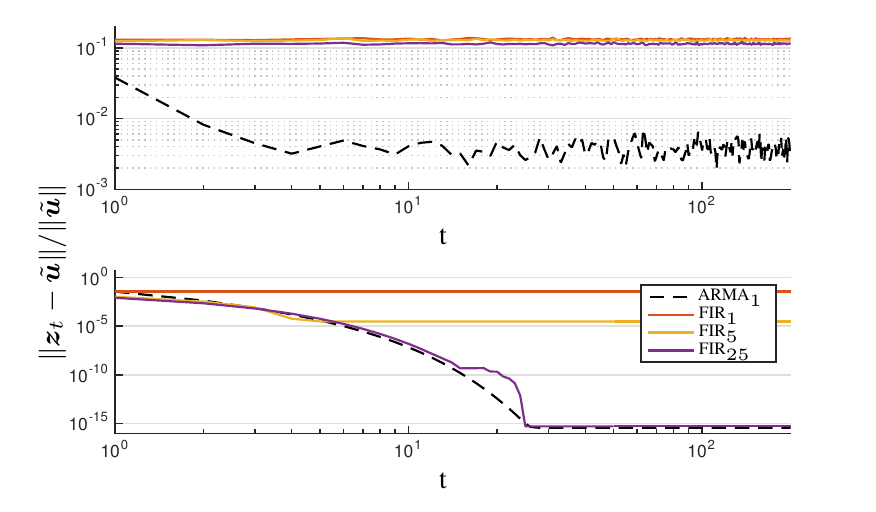}
\caption{Normalized error related to the solution of the denoising problem in a distributed way with graph filters. Results relative to random time-varying graph (top) and static graph (bottom). We compare the results of ARMA$_1$ with different FIR graph filters. \revv{The FIR$_K$ output at time $t$ is calculated as $\y_t = \sum_{k=0}^Kh_k\Phi_\L(t,t-k+1)\x$ and is not arrested after $K$ time instants.}}
\vspace{-4mm}
\label{fig.Den_err}
\end{figure}

To compare the results, we calculate the normalized error between the graph filter output and the analytical solution of the optimization problem~\eqref{eq:denoising} \rev{solved \wrt the initial graph.}
In Fig. \ref{fig.Den_err}, we plot the normalized error of solving the denoising problem via distributed graph filtering. We consider an ARMA$_1$ graph filter (designed according to Section~\ref{subsec:gra_den_interp} \revv{with $\y_0 = \x$}) and we compare its performance with FIR graph filters of different orders.
\revv{As expected, we can see that in the static case the ARMA graph after $K$ iterations has the same performance as the FIR$_K$ filter and thwy both match the solution of the optimization problem. On the other hand, in the random time-varying graph the ARMA filter outperforms all the FIRs. This is mainly due to its implementation strategy, which allows the ARMAs to handle the graph variations better. Also note that the result obtained from the ARMA$_1$ in the time-varying scenario quantifies the theoretical derivations in \eqref{eq:convTVTV} and Theorem~\ref{theo: err_conv}. Indeed, we can see that the obtained output is close (up to an order 10$^{-3}$) to the desired frequency response and the convergence is linear.}

We can see that, for both the random time-varying and static graph the ARMA graph filter gives a lower error with respect to the solution of the optimization problem. As we have seen before, for static graphs the ARMA filter matches correctly the analytical solution. Meanwhile, when the graph is generated randomly it approximates quite well the latter. On the other hand, the performance of the FIR filters is limited by the fact that they only approximate the solution of the optimization problem. \rev{Notice that the FIR output is given after $K$ time instants and then the filter is reset, hence the spikes in the figure.} 

\paragraph{Approximate design (ideal low pass).}
We use graph filters of increasing orders, specifically $K = 2, 4$ and $6$, \revv{to universally approximate} a low-pass graph filter with frequency response $g^*(\lambda) = 1$ if $\lambda < 0.5$, and zero otherwise. 
We consider a graph with 100 nodes living in a square of 1000 $\times$ 1000 meters, with a communication range of 180 meters. We simulated node mobility according to the random waypoint model~\cite{Bonnmotion2010} with a constant speed selected in $[0, 3]$ $m/s$.

We start with a scenario where only the graph topology changes in time whereas the graph signal remains invariant. Then, we simulate a more general case, where both the graph topology and the graph signal are time-varying. For both scenarios, we perform 20 distinct runs, each lasting 10 minutes and consisting of 600 iterations (one iteration per second). \rev{We then compare the response error $\|g - g^{*}\|/\|g^{*}\|$ of the ARMA filters with that of the analogous FIR filters while accounting for the initialization phase (we ignore the first 100 iterations). \revv{More specifically, at each time instant, we compute $g(\lambda_n) = \hat{y}_n/\hat{x}_n$, where the points $\hat{x}_n \approx 0$ are not considered. Then, it is compared with the desired frequency response at the particular graph frequency $\lambda_n$, \ie $g^*(\lambda_n)$.} The statistical significance of our results stems not only by the 20 distinct repetitions, but also by the large number of graph topologies experienced in each run.}

\begin{figure}[!t]
\centering
\begin{tabular}{c}
\hspace{-5mm}\includegraphics[width=1.05\columnwidth]{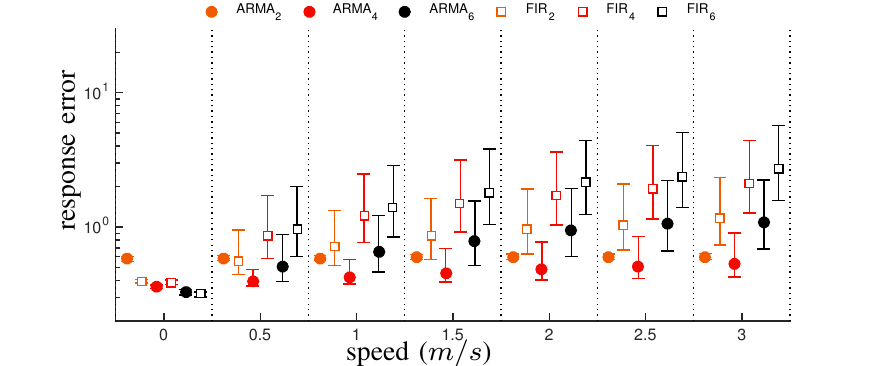}
\\ \\
\hspace{-5mm}\includegraphics[width=1.05\columnwidth]{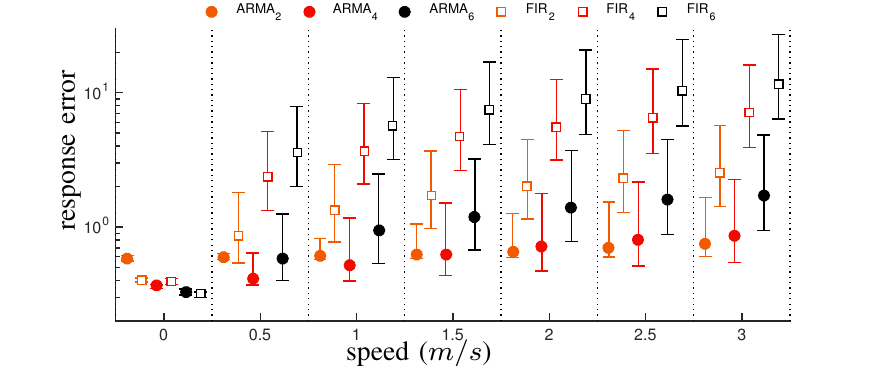}
\end{tabular}
\caption{The effects of the variations only on the graph topology (top) and on both graph and graph signal (bottom). The response error is calculated as $\|g(\lambda) - g^{*}(\lambda)\|/\|g^{*}(\lambda)\|$. Each error bar shows the standard deviation of the approximation error over 20 runs. A small horizontal offset is included to improve visibility.}
\vspace{-4mm}
\label{fig.TV_G}
\end{figure}

\vspace{3mm}
\emph{Time-varying graph, constant graph signal.} For this scenario, $\x$ is a random vector with entries selected uniformly distributed in [0, 1]. In Fig.~\ref{fig.TV_G} (top) we show the response error for increasingly higher node speeds. As expected, the error increases with speed. Nevertheless, the ARMA filters show a better performance in comparison to their analogous FIR filters. This indicates that the proposed approach handles better time-vaying settings than the FIR filters. Further, we can see that higher order ARMA filters approximate better the desired frequency response (smaller error) when the graph is static. On the other hand, when mobility is present, higher order ARMA recursions lead to a rough approximation due to their slower convergence and the fact that the poles go closer to the unit circle (larger coefficients).

\vspace{3mm}
\emph{Time-varying graph and graph signal.} To conclude, we simulate the more general case where both the graph structure and the graph signal change in time. Simulating a target tracking scenario, we let  the signal at each node take a value of zero, unless a node was within 100 meters from a target point, residing at the middle of the $1000\times~1000$ meter simulation area, in which case the node's value was set to one. In Fig.~\ref{fig.TV_G} (bottom) we show the response error as a function of the node's speed. It is not surprising that letting the graph signal change over time makes the graph filtering problem harder and the corresponding errors of all graph filters larger. 
\revv{As expected, the error increases with speed. Nevertheless, the ARMA filters show a better performance in comparison to their analogous FIR filters for all cases other than when $K=2$ and speed is zero (the latter is an artifact of the Shank's method).}


\section{Conclusions}
\label{sec:conclusions}
\setlength{\extrarowheight}{2.0pt}
\setlength{\tabcolsep}{9pt}
\begin{table*}[!t]
\linespread{1.5}
\centering
\caption{ Residues $r_k$ and poles $p_k$ of parallel ARMA$_K$ filter, for $K = 3, 5$ and $7$.
}
\label{table:coeff}
\begin{center}
\vspace{-4mm}
 \begin{tabular}{@{}c  @{}c @{}c @{}c @{} c  @{}c @{} c @{} c } \toprule
\textit{\ order\ } & $r_0, p_0$ &  $r_1, p_1$ & $r_2, p_2$ & $r_3, p_3$ & $r_4, p_4$ & $r_5, p_5$ & $r_6, p_6$ \\  
 \midrule 
K=3 & \hspace{3mm}\begin{tabular}{@{}c@{}} 10.954 + 0$i$, \\ -6.666 + 0$i$\end{tabular}  & \hspace{3mm}\begin{tabular}{@{}c@{}} 1.275 + 1.005$i$, \\ 0.202 + 1.398$i$\end{tabular}  & \hspace{3mm}\begin{tabular}{@{}c@{}} 1.275 - 1.005$i$, \\ 0.202 - 1.398$i$\end{tabular} & - & - & - &  -  \\ [0.0ex]
\rowcolor{black!7}[0pt][0pt]K=5 & \hspace{3mm}\begin{tabular}{@{}c@{}} -7.025 + 0$i$, \\ -3.674 + 0$i$\end{tabular}  & \hspace{3mm}\begin{tabular}{@{}c@{}} -1.884 - 1.298$i$, \\ -0.420 + 1.269$i$\end{tabular}  & \hspace{3mm}\begin{tabular}{@{}c@{}} -1.884 + 1.298$i$, \\  -0.420 - 1.269$i$\end{tabular} & \hspace{3mm}\begin{tabular}{@{}c@{}} 1.433 - 1.568$i$, \\ 0.703 + 1.129$i$\end{tabular}& \hspace{3mm}\begin{tabular}{@{}c@{}}  1.433 + 1.568$i$,\\ 0.703 + 1.129$i$\end{tabular}&  -  &  - \\ 
K=7 & \hspace{3mm}\begin{tabular}{@{}c@{}} -46.398 + 0$i$, \\ --3.842 + 0$i$\end{tabular}  & \hspace{3mm}\begin{tabular}{@{}c@{}} -20.207 - 8.343$i$, \\ 0.102 + 1.427$i$\end{tabular}  & \hspace{3mm}\begin{tabular}{@{}c@{}} -20.207 + 8.343$i$, \\  0.102 + 1.427$i$\end{tabular} & \hspace{3mm}\begin{tabular}{@{}c@{}} -5.205 + 4.946$i$, \\ -0.785 + 1.128$i$\end{tabular}& \hspace{3mm}\begin{tabular}{@{}c@{}} -5.205 - 4.946$i$,\\ -0.785 + 1.128$i$\end{tabular}& \hspace{3mm}\begin{tabular}{@{}c@{}}  3.124 - 10.622$i$, \\ 0.902 + 1.011$i$\end{tabular} & \hspace{3mm}\begin{tabular}{@{}c@{}} 3.124 + 10.622$i$, \\ 0.902 - 1.011$i$\end{tabular} \\ 
 \bottomrule
\end{tabular}
\end{center}
\vspace{-1mm}
\end{table*}

In this work, we presented the ARMA recursion as way of implementing IIR graph filters in a distributed way. We showed two different options to approximate any desired graph frequency response with an ARMA filter of order $K$, namely the parallel and periodic implementations. Experiments show that, our Shanks-based design method produces stable filter, which can approximate arbitrary well any desired graph frequency response. Furthermore, they attain linear convergence. 
\revv{The proposed ARMA graph filters were shown to provide solutions for two important graph filtering tasks: $(i)$ Tikhonov and Wiener graph denoising and $(ii)$ graph signal interpolation under smoothness assumptions.}

Characterized by a rational frequency response, ARMA graph filters can track time-varying input signals. In this case, we showed that our filters naturally extend to a  2-dimensional frequency space simultaneously operating in the graph- and time-frequency domain. In this way, we can distributedly filter a signal jointly in both domains, instead of operating on each of them separately, which leads to higher costs. \rev{Though we did not provide solutions for the joint design problem, we illustrated that, due to a connection between the poles in the graph domain and those in the Z-domain, graph filters which are designed only \wrt the graph frequency domain, are characterized by a specific temporal behavior.}
Further, we characterized the ARMA recursion when also the graph structure varies in time and proved that the linear convergence can be guaranteed also in this setting.

Our future research will be based on finding analytical stable design methods for both 1 and 2-dimensional ARMA recursions.
Furthermore, we are also interested to extend the proposed 2-dimensional graph filter to a separable case in order to obtain a disjoint filter design in each domain.


\appendix
\section{Deferred Proofs}
\label{app:dereffer_proofs}
\vspace{-2mm}
\subsection*{Table I}
For completeness and reproducibility, we include in Table~\ref{table:coeff} the filter coefficients of a parallel ARMA$_K$ filter approximating the step function with cut-off $\lambda_c = 0.5$ (\ie filter response equal to 1 for $\lambda < \lambda_c$ and zero otherwise) for $K = 3, 5, 7$. Higher order filters are omitted due to space considerations. 

\vspace{-2mm}
\subsection*{Proof of Theorem~\ref{theo:periodic_static}}

Define matrices $\bGamma_t = \theta_t {\bf I} + \psi_t \L$ and $\bPhi_\Gamma{(t,t')} = \bGamma_{t} \bGamma_{t-1} \cdots \bGamma_{t'}$ if $t \geq t'$, whereas $\bPhi_\Gamma{({t,t'})} = {\bf I}$ otherwise. 
%
%
The output at the end of each period can be re-written as a time-invariant system
\vspace{-2mm}
\begin{subequations}\label{eq:periodic_ext}
\begin{align}
  \y_{(i+1)K} &= \overbrace{\bPhi_\Gamma{(K-1,0)}}^{\triangleq \A} \y_{iK} + \overbrace{ \sum_{k = 0}^{K-1} \bPhi_\Gamma{(K-1,k+1)} \varphi_{k}}^{\triangleq \B} \x \\
  \z_{(i+1)K} &= \y_{(i+1)K} +  c \x.
\label{eq:recursion_coarse}
\end{align}
\end{subequations}
Both $\A$ and $\B$ have the same eigenvectors $\bphi_n$ as $\L$. \rev{Notice that \eqref{eq:periodic_ext} resembles \eqref{eq:ARMA1} and we can proceed in an identical manner}. As such, when the maximum eigenvalue of $\A$ is bounded by $\abs*{\lmax(\A)} < 1$, the steady state of~\eqref{eq:recursion_coarse} is 
\vspace{-1mm}
\begin{equation}
 \z = (I-\A)^{-1} \B\x + c \x = \sum_{n=1}^N \left(c+\frac{\lambda_n(\B)}{1 - \lambda_n(\A)} \right)\hat{x}_n \bphi_n.
\end{equation}
\revv{To derive the exact response, we exploit the backward product in the definition of $\bPhi_\Gamma{(t_1,t_2)}$ and we obtain} 
\begin{align}
	\lambda_n(\bPhi_\Gamma{(t_1,t_2)}) = \prod_{\tau = t_1}^{t_2} \lambda_n(\bGamma_t) =  \prod_{\tau = t_1}^{t_2} \left(\theta_{\tau} + \psi_{\tau}\lambda_n \right),
\end{align}
which, by the definition of $\A$ and $\B$, yields the desired frequency response. 
The linear convergence rate and stability condition follow from the linear convergence of~\eqref{eq:recursion_coarse} to $\y$ with rate $|\lmax(\A)|$.
%

\vspace{-2mm}
\subsection*{Proof of Theorem~\ref{theorem:ARMAKTV_parallel}}

The recursion of a parallel \ARMA{K} with time-varying input is 
\vspace{-2mm}
\begin{subequations}
\begin{empheq}{align}
  \y_{t+1}^{(k)} &= \psi^{(k)}\L \y_t^{(k)} + \varphi^{(k)}\x_t \ (\forall k) \\
  \z_{t+1} &= \sum_{k = 1}^{K} \y^{(k)}_{t+1} +  c \x_t, 
\end{empheq}
\end{subequations}

where $\y^{(k)}_t$ is the state of the $k$th \ARMA{1}, whereas $\x_t$  and $\z_t$ are the input and output graph signals, respectively. Using the Kronecker product the above takes the more compact form
\begin{subequations}
\begin{empheq}{align}
\y_{t+1} &= \left(\bPsi \otimes \L\right) \y_{t} + \bvarphi \otimes \x_t \\
\z_{t+1} &= (\mathbf{1}^\transp\otimes\I_N) \y_{t+1} + c \, \x_t,
\end{empheq}
\label{eq:ARMAK_TV_kron}
\end{subequations}
\rev{with $\y_t = \big[\y^{(1)\top}_t, \y^{(2)\top}_t, \cdots, \y^{(K)\top}_t \big]^\top$ the $NK \times 1$ stacked state vector,} $\bPsi = \text{diag}(\psi^{(1)}, \psi^{(2)}, \cdots, \psi^{(K)})$ a diagonal $K\times K$ coefficient matrix, $\bvarphi = (\varphi^{(1)}, \varphi^{(2)}, \cdots, \varphi^{(K)})^\top$  a $K\times 1 $ coefficient vector, and $\mathbf{1}$ the $K\times 1$ one-vector.
%
%
We therefore have
\vspace{-5mm}

\begin{align*}
\y_{t+1} &= \left(\bPsi \otimes \L\right)^t \y_0 + \sum_{\tau = 0}^t \left(\bPsi \otimes \L\right)^\tau \left( \bvarphi \otimes \x_{t - \tau} \right) \\
	     &= \left(\bPsi^t \otimes \L^t\right) \y_0 + \sum_{\tau = 0}^t \left(\bPsi^\tau \bvarphi \right) \otimes \left( \L^\tau \x_{t - \tau} \right).
\label{eq:eq1}
\end{align*}
Notice that, when the stability condition $ \norm*{\psi^{(k)} \L} < 1$ is met, $\lim_{t\rightarrow\infty}\norm*{\left(\bPsi^t \otimes \L^t\right) \y_0} = 0$. Hence, for sufficiently large $t$, the \ARMA{k} output is
\vspace{-2mm}
\begin{align*}
\lim_{t\rightarrow\infty} \z_{t+1} &= \lim_{t\rightarrow\infty}  \sum_{\tau = 0}^t (\mathbf{1}^\transp\otimes\I_N)\left(\bPsi^\tau \bvarphi \right) \otimes \left( \L^\tau \x_{t - \tau} \right)  + c \, \x_t\\
		&= \lim_{t\rightarrow\infty} \sum_{\tau = 0}^t \left(\mathbf{1}^\transp\bPsi^\tau \bvarphi \right) \otimes \left( \L^\tau \x_{t - \tau} \right) + c \, \x_t \\
		 &= \lim_{t\rightarrow\infty} \sum_{\tau = 0}^t \sum_{k= 1}^{K} \varphi^{(k)} \left(\psi^{(k)} \L \right)^\tau \x_{t - \tau} + c \, \x_t,
\end{align*}
where we have used the Kronecker product property $(\A\otimes\B)(\C\otimes\D) = (\A\C)\otimes(\B\D)$ and expressed the Kronecker product as the sum of $K$ terms.
The transfer matrix $\H(z)$ is obtained by taking the Z-transform in both sides and re-arranging the terms
\begin{align*}
\H(z) &= z^{-1} \sum_{k= 1}^{K} \varphi^{(k)} \sum_{\tau = 0}^\infty  \left(\psi^{(k)} \L \right)^\tau z^{- \tau} + c z^{-1}. 
\end{align*}
Finally, applying the GFT and using the properties of geometric series we obtain the joint transfer function in closed-form expression
\begin{align*}
  	H(z,\mu) &= z^{-1} \sum_{k= 1}^{K} \varphi^{(k)} \sum_{\tau = 0}^\infty  \left(\psi^{(k)} \lambda \right)^\tau z^{- \tau} + c z^{-1} \\
  	&= \sum_{k= 1}^{K} \frac{\varphi^{(k)} z^{-1} }{1- \psi^{(k)} \lambda z^{-1}} + c z^{-1}
\end{align*}
and our claim follows.
\vspace{-3mm}
\subsection*{Proof of Theorem~\ref{theorem:ARMAKTV_periodic}}

\rev{Recall for comodity $\bGamma_t = \theta_t {\bf I} + \psi_t \L$ and $\bPhi_\Gamma{(t,t')} = \bGamma_{t} \bGamma_{t-1} \cdots \bGamma_{t'}$ if $t \geq t'$, whereas $\bPhi_\Gamma{({t,t'})} = {\bf I}$ otherwise.}
\rev{Then, expanding recursion~\eqref{eq:ARMAK_periodic} for a time-varying input signal, we find that at the end of the $i$-th period, the filter output is $\z_{iK} = \y_{iK} + c \x_{iK-1} $, where 
\begin{align*}
	\y_{i+1K} &= {\bPhi_\Gamma{(iK-1,0)}} \y_{0}  + \sum_{k = 0}^{iK-1} \bPhi_\Gamma{(iK-1, k+1)} \varphi_{k} \x_k.
\end{align*}
For sufficiently large $i$ and assuming that the stability condition of Theorem~\ref{theorem:ARMAK_periodic} holds, the first term approaches the zero vector and can be ignored without any loss of generality.

We proceed by restricting the input graph signal to $\x_{iK} = x_{iK} \bphi$, where $\lambda, \bphi$ is an eigenpair of $\L$ (similarly $\y_{iK} = y_{iK} \bphi$ and $\z_{iK} = z_{iK} \bphi$). For compactness we introduce the shorthand notation $\lambda_k = \theta_k + \lambda \psi_k$ and $L = \prod_{\tau = 0}^{K-1} \lambda_\tau$. We then have 
\begin{align*}
	y_{iK} &= \sum_{k = 0}^{iK-1} \left(\prod_{\tau = k+1}^{iK-1} \lambda_\tau \right) \varphi_k x_k,
\end{align*}
which, after taking the Z-transform, becomes
\begin{align*}
	\frac{Y(z)}{X(z)} &= \sum_{k = 0}^{iK-1} \left(\prod_{\tau = k+1}^{iK-1} \lambda_\tau \right) \varphi_k z^{k-iK} \\
	&= \sum_{j = 0}^{i-1} L^{i-j-1} z^{(j-i)K}  \left( \sum_{k = 0}^{K-1} \left(\prod_{\tau = k+1}^{iK-1} \lambda_\tau \right) \varphi_k z^k \right) .
\end{align*}
The last step exploited the periodicity of coefficients in order to group the common terms of periods $j = 1, \ldots, i - 1$.
In the limit, the first term approaches
\begin{align*}
	\!\lim_{i\rightarrow\infty} \sum_{j = 0}^{i-1} L^{i-j-1} z^{(j-i)K} \!&=\! \lim_{i\rightarrow\infty} L^{-1} \sum_{j = 0}^{i-1} \left(\frac{L}{z^K}\right)^{i-j} \!\!= \frac{1}{z^K - L} \\
\end{align*}
Putting everything together, we find that the joint transfer function of the filter is
\begin{align*}
	H(z, \mu) = \frac{Z(z)}{X(z)} = \frac{\sum_{k = 0}^{K-1} \left(\prod_{\tau = k+1}^{K-1} \lambda_\tau \right) \varphi_k z^k}{z^{K} - L} + c z^{-1}
\end{align*}
and, after normalization, the claim \eqref{eq:ARMAKTV_periodic} follows.
}

\subsection*{Proof of Theorem~\ref{theo: err_conv}}
We start the proof by substituting the expression \eqref{eq:ARMA_TVTV} for $t_1$ and $t_2$ into the numerator of \eqref{eq:err_defin}. Then, we can write
\begin{align}
	\norm{\z_{t_1 + 1} - \z_{t_2+1}} &= \| \psi^{t_1+1} \bPhi_\L(t_1, 0) \y_{0} - \psi^{t_2+1} \bPhi_\L(t_2, 0) \y_{0}  \nonumber \\
	&\hspace{-12mm} + \varphi \sum_{\tau=0}^{t_1} \psi^\tau \bPhi_\L(t_1, t_1-\tau+1) \x_{t_1-\tau} + c \x_{t_1} \nonumber \\
	&\hspace{-12mm} - \varphi \sum_{\tau=0}^{t_2} \psi^\tau \bPhi_\L(t_2, t_2-\tau+1) \x_{t_2-\tau} - c \x_{t_2} \| .
\end{align}
Rearranging the terms, we have
\begin{align}
	\norm{\z_{t_1 + 1} - \z_{t_2+1}} & =\norm{ \psi^{t_1+1} \bPhi_\L(t_1, 0) \y_{0} - \psi^{t_2+1} \bPhi_\L(t_2, 0) \y_{0}\nonumber \\
	&\hspace{-16mm} + \varphi \sum_{\tau=t_2 + 1}^{t_1} \psi^\tau \bPhi_\L(t_1, t_1-\tau+1) \x_{t_1-\tau} + c( \x_{t_1} - \x_{t_2})} \nonumber 
\end{align}
%
By using the Cauchy-Schwarz property, the triangle inequality of the spectral norm, and a uniform bound $\varrho$ on the eigenvalues of matrices $\M_t$, the above expression simplifies  
\begin{align}
\begin{split}
	\norm{\z_{t_1 + 1} - \z_{t_2+1}} &\leq \left(\abs{\psi\varrho}^{t_1+1} + \abs{\psi \varrho}^{t_2+1} \right) \norm{\y_{0}}\\
	&\hspace{-15mm}\quad + \abs{\varphi} \sum_{\tau=t_2 + 1}^{t_1} \abs{\psi\varrho}^\tau \norm{\x_{t_1-\tau}} + \abs{c} \norm{\x_{t_1} - \x_{t_2}}.
	\end{split}
\end{align}
Leveraging the fact that $|\psi\varrho| < 1$, as well as that $ \norm{\x_t} \leq x_\textit{max} $ for every $t$, we can express the sum in a closed form 
\begin{align}
	\sum_{\tau=t_2 + 1}^{t_1} \abs{\psi\varrho}^\tau \norm{\x_{t_1-\tau}} \leq x_\textit{max} \left(\frac{\abs{\psi \varrho}^{t_2+1} - \abs{\psi \varrho}^{t_1+1}}{1 - \abs{\psi \varrho}}\right).
\end{align}
We obtain the desired bound on $\epsilon_{t_1,t_2}$ by dividing the above expressions with $x_\textit{max}$ and adjusting the \revv{indices}.
\bibliographystyle{IEEEtran}
\bibliography{bibliography}
\balance

\balance

\vspace{-0.4in}
\begin{IEEEbiography}[{\includegraphics[width=1in,height=1.25in,clip,keepaspectratio]{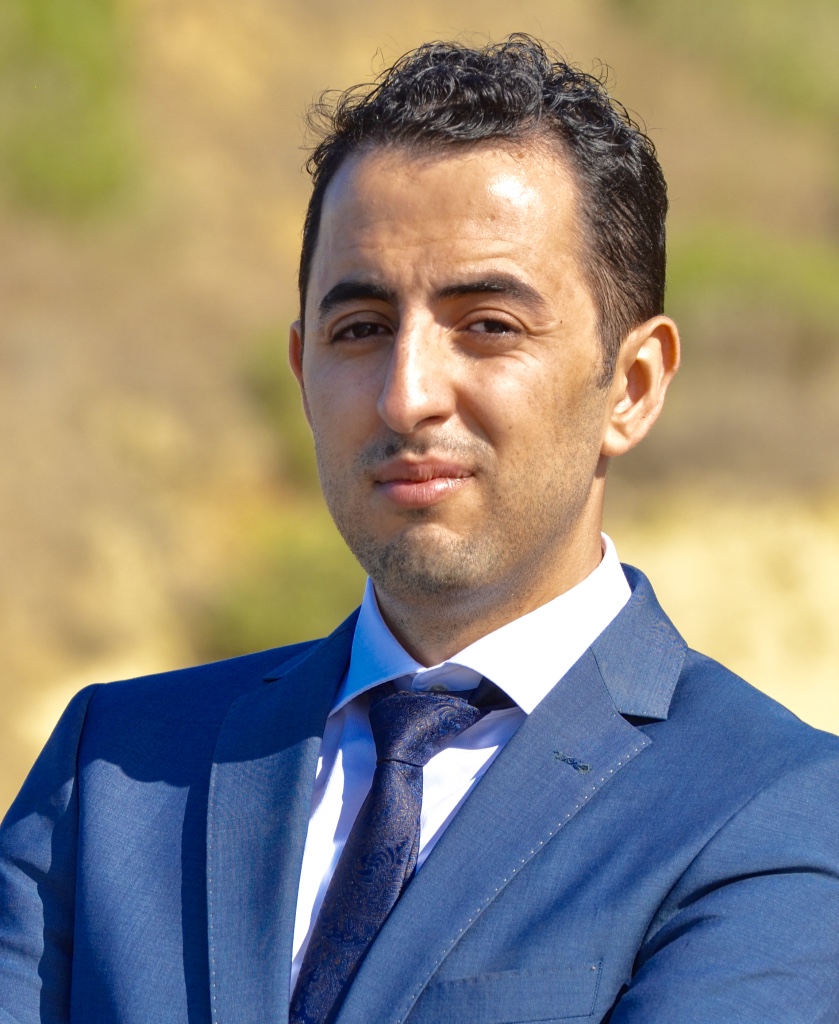}}]{Elvin Isufi}
was born in Albania in 1989. He received his Master of Science degree (cum laude) in Electronic and Telecommunication Engineering from University of Perugia, Italy, in 2014. From November 2013 to August 2014 he was a visiting member at Circuits and Systems group, Delft University of Technology, where he worked on his master thesis. Since November 2014 he is pursuing the Ph. D. degree on signal processing on graphs at Delft University of Technology. His research interests include signal processing on graphs, network coding and underwater communications.
\end{IEEEbiography}
\vspace{-0.4in}
\begin{IEEEbiography}[{\includegraphics[trim={10cm 0 15cm 0},clip,width=1in,height=1.25in]{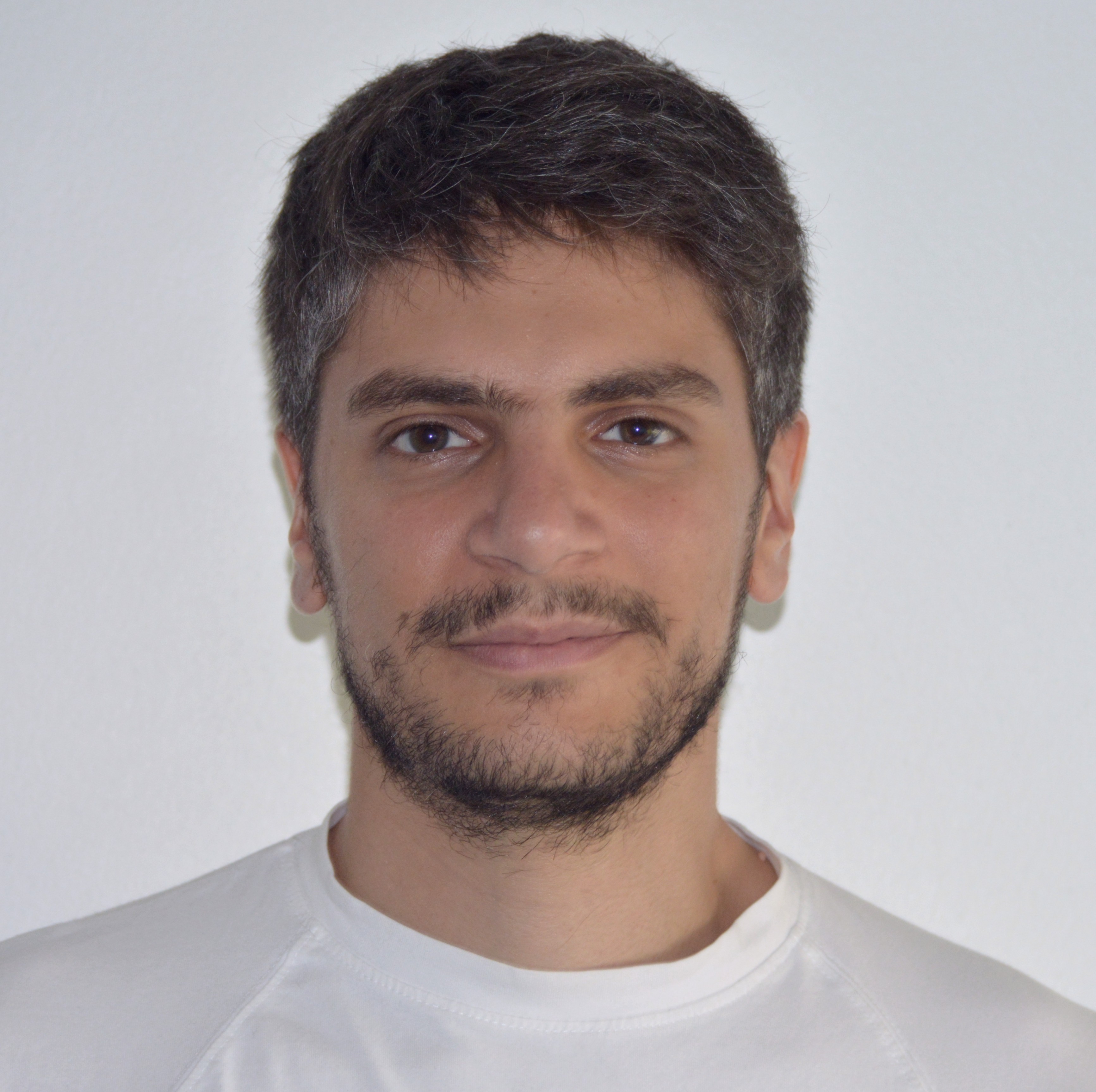}}]{Andreas Loukas}
is a research scientist jointly hosted by the LTS2 and LTS4 signal processing labs of the \'Ecole polytechnique f\'ed\'erale de Lausanne. His research interest lie in the intersection of data analysis, graph theory, and signal processing. Andreas holds a doctorate in Computer Science from Delft University of Technology, where he focused on distributed algorithms for information processing, and a Diploma in Computer Science from the University of Patras. 
\end{IEEEbiography}
\vspace{-0.4in}
\begin{IEEEbiography}[{\includegraphics[width=1in,height=1.25in,clip,keepaspectratio]{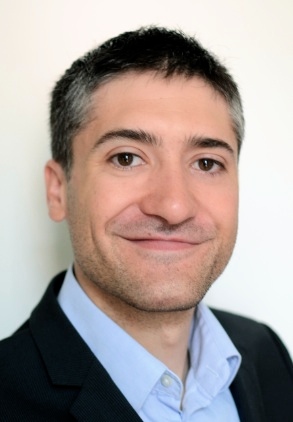}}]{Andrea Simonetto}
received the Ph.D. degree in systems and control from Delft University of Technology, Delft, The Netherlands, in 2012. He is currently a Post-Doctoral Researcher with the ICTEAM institute, at Universit\'e catholique de Louvain, Belgium. He was a Post-Doctoral Researcher with the Electrical Engineering Department, at Delft University of Technology. His current research interests include distributed estimation, control, and optimization.
\end{IEEEbiography}
\vspace{-0.4in}
\begin{IEEEbiography}[{\includegraphics[width=1in,height=1.25in,clip,keepaspectratio]{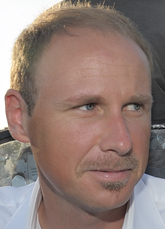}}]{Geert Leus}
received the MSc and PhD degree 
in applied sciences  from the Katholieke Universiteit Leuven, Belgium, 
in June 1996 and May 2000, respectively. Currently, Geert Leus is an 
``Antoni van Leeuwenhoek" Full Professor at the Faculty of Electrical 
Engineering, Mathematics and Computer Science of the Delft University 
of Technology, The Netherlands. His research interests are in the area 
of signal processing for communications. Geert Leus received a 2002 
IEEE Signal Processing Society Young Author Best Paper Award and a 2005 
IEEE Signal Processing Society Best Paper Award. He is a Fellow of the IEEE and a Fellow of EURASIP. Geert Leus was the Chair of the IEEE  Signal Processing for 
Communications and Networking Technical Committee, and an Associate 
Editor for the IEEE Transactions on Signal Processing, the IEEE 
Transactions on Wireless Communications, the IEEE Signal Processing 
Letters, and the EURASIP Journal on Advances in Signal Processing. 
Currently, he is a Member-at-Large to the Board of Governors of 
the IEEE Signal Processing Society and a member of the IEEE Sensor Array 
and Multichannel Technical Committee. He finally serves as the Editor 
in Chief of the EURASIP Journal on Advances in Signal Processing. 
\end{IEEEbiography}

\end{document}